\newcommand{\norm}[1]{\left\lVert#1\right\rVert}
\newtheorem{thm}{Theorem}[section]
\newtheorem{proof}{Proof}[section]
\definecolor{newcolor}{rgb}{.8,.349,.1}
\journal{Journal of Computational Physics}
\begin{document}

\verso{Jiahao Zhang \textit{etal}}

\begin{frontmatter}

\title{Energy-Dissipative Evolutionary Deep Operator Neural Networks}%\footnote{This work is partially supported by  NSF DMS-1720442 and AFOSR FA9550-20-1-0309.}}%

\author[1]{Jiahao \snm{Zhang}\fnref{fn1}}
\author[1]{Shiheng \snm{Zhang}\fnref{fn1}}
%% Third author's email
\fntext[fn1]{These two authors contributed equally to this work.}

\author[1]{Jie \snm{Shen}\corref{cor1}}
\ead{shen7@purdue.edu}
\author[1,2]{Guang \snm{Lin}\corref{cor1}}
\cortext[cor1]{Corresponding authors.}
\ead{guanglin@purdue.edu}
\address[1]{Department of Mathematics, Purdue University, 150 N. University Street, West Lafayette, IN 47907-2067, USA}
\address[2]{School of Mechanical Engineering, Purdue University, 585 Purdue Mall, West Lafayette, IN 47907-2067, USA}

\begin{abstract}
%%%
Energy-Dissipative Evolutionary Deep Operator Neural Network is an operator learning neural network. It is designed to seek  numerical solutions for a class of  partial differential equations instead of a single partial differential equation, such as partial differential equations with different parameters or different initial conditions. %This neural work is one of the applications of the universal approximation theorem. 
The network consists of two sub-networks, the Branch net, and the Trunk net. For an objective operator $\mathcal{G}$, the Branch net encodes different input functions $u$ at the same number of sensors $y_i, i = 1,2,\cdots, m$, and the Trunk net evaluates the output function at any location. 
%The data sample feed into the neural network is in the form of $(u,y,\mathcal{G}(u)(y))$, for each $u$, there are a fixed number of data samples depending on $m$. 
By minimizing the error between the evaluated output $q$ and the expected output $\mathcal{G}(u)(y)$, DeepONet generates a good approximation of the operator $\mathcal{G}$. % DeepONet can deal with a lot of time-space PDEs well, but it needs plenty of data at each time step. The sensors $y_i$ contains both space and time variables. In order to decrease the need for data in DeepONet, we invented the Evolutionary DeepONet, which does not treat the time variable as an input. Hence, it needs much less data than the original DeepONet. The parameters in the neural network are changed in each time step. It is reasonable to treat the parameters as a function of the time variable $t$. In our EDE-DeepONet model, the parameters $W(t)$ in the neural network are taken as a function of $t$. By the idea of numerical methods solving ordinary differential equations(ODEs), we can obtain $W^n$ at each $t_n$ if the initial condition $W_0$ and the update rule, which is also the derivative of $W$, $\frac{dW}{dt}$ are known. The initial condition of the parameters, $W_0$, can be determined by the original DeepONet with the sensors $(y_0)_i$, where $y_0$ is the point at the initial state. The update rule of parameters is based on the solution of a minimization problem, which is equivalent to a solution of a linear system. Besides, 
In order to preserve  essential physical properties of PDEs, such as the Energy Dissipation Law, we adopt a scalar auxiliary variable approach to generate the minimization problem. It introduces a modified energy and  enables unconditional  energy dissipation law in the discrete level. By taking the parameter as a function of the time $t$ variable, this network can predict the accurate solution at any further time with feeding data only at the initial state. The data needed can be generated by the initial conditions, which are readily available. In order to validate the accuracy and efficiency of our neural networks, we provide numerical simulations of several partial differential equations, including heat equations, parametric heat equations, and Allen-Cahn equations. %The numerical results verify the accuracy and modified energy dissipation law of PDEs.
%%%%
\end{abstract}

\begin{keyword}
Operator Learning \\
Evolutionary Neural Networks \\
Energy Dissipative \\
Parametric equation \\
Scalar auxiliary variable \\
Deep learning \\
\end{keyword}

%\begin{keyword}
%% MSC codes here, in the form: \MSC code \sep code
%% or \MSC[2008] code \sep code (2000 is the default)
%\MSC 41A05\sep 41A10\sep 65D05\sep 65D17
%% Keywords
%\KWD Keyword1\sep Keyword2\sep Keyword3
%\end{keyword}

\end{frontmatter}

%\linenumbers

%% main text
% \\

\section{Introduction}
Operator learning is a popular and challenging problem with potential applications across various disciplines. The opportunity to learn an operator over a domain in Euclidean spaces\cite{kovachki2021neural} and Banach spaces\cite{li2020neural} opens a new class of problems in neural network design with generalized applicability. In application to solve partial differential equations(PDEs), operator learning has the potential to predict  accurate solutions for the PDE by acquiring extensive prior knowledge \cite{khoo2021solving, bhattacharya2020model, nelsen2021random, li2020fourier, patel2021physics, opschoor2020deep, schwab2019deep, o2022derivative, wu2020data}.
In a recent paper\cite{lu2021learning}, Lu, Jin, and Karniadakis proposed an operator learning method with some deep operator networks, named as DeepONets. It is based on the universal approximation theorem \cite{cybenko1989approximation,hornik1991approximation,hornik1989multilayer}. The goal of this neural network is to learn an operator instead of a single function, which is usually the solution of a  PDE. For any operator $\mathcal{G}$ on a domain $\Omega$, we can define $\mathcal{G}$ as a mapping from $\Omega^* \rightarrow \Omega^*$ with $\mathcal{G}(u)(y) \in R$ for any $y\in \Omega$. $\mathcal{G}(u)(y)$ is the expected output of the neural network, which is usually a real number. The objective of the training is to obtain an approximation of $\mathcal{G}$, where we need to represent operators and functions in a discrete form. In practice, it is very common to represent a continuous function or operator by the values evaluated at finite and enough locations $\{x_1, x_2, \cdots, x_m \}$, which is called ‘‘sensors" in DeepONet. The network takes $[u(x_1),u(x_2),\cdots, u(x_m)]$ and $y$ as the input. The loss function is the difference between the output $q$ and the expected output $\mathcal{G}(u)(y)$. Generally, there are two kinds of DeepONet, Stacked DeepONet, and Unstacked DeepONet. The Stacked DeepONet consists of $p$ branch networks and one trunk network. The number of the Trunk networks of the Unstacked DeepONet is the same as the DeepONet, but the Unstacked DeepONet merges all the $p$ branch networks into a single one. An Unstacked DeepONet combines two sub-networks, Branch net, and Trunk net. The Branch net encodes the input function $u$ at some sensors, $\{x_i\in  \Omega \,|\, i = 1, \cdots, m\}$. The output of the Branch net consists of $p$ neurons, where each neuron can be seen as a scalar, $b_j = b_j(u(x_1), u(x_2), \cdots, u(x_m))$, $j=1,2, \cdots, p$. The Trunk net encodes some evaluation points $\{y_k \in \Omega | k = 1, \cdots, n\}$, while the output also consists of $p$ neurons and each neuron is a scalar $g_j = g_j(y_1, y_2, \cdots, y_n)$, $j=1,2, \cdots, p$. The evaluation point $y_i$ can be arbitrary in order to obtain the loss function. The number of neurons at the last layer of the Trunk net and the Branch net is the same. Hence, the output of the DeepONet can be written as an inner product of $(b_1, b_2, \cdots, b_p)$ and $(g_1, g_2, \cdots, g_p)$. In other words, the relationship between the expected output and the evaluated output is $\mathcal{G}(u)(y) \approx \sum_{j=1}^{p} b_jg_j$. The DeepONet is an application of the Universal Approximation Theorem for Operator, which is proposed by Chen $\&$ Chen \cite{392253}:
\begin{thm}[Universal Approximation Theorem for Operator]
Suppose that $\Omega_1$ is a compact set in $X$, $X$ is a Banach Space, $V$ is a compact set in $C(\Omega_1)$,  $\Omega_2$ is a compact set in  $\boldsymbol{R}^d$, $\sigma$ is a continuous non-polynomial function, $\mathcal{G}$ is a nonlinear continuous operator, which maps $v$ into $C(\Omega_2)$, then for any $\epsilon>0$, there are positive integers $M, N, m$, constants $c_{i}^{k}, \zeta_{k}, \xi_{i j}^{k} \in \boldsymbol{R}$, points $\omega_{k} \in \boldsymbol{R}^{n}, x_{j} \in K_{1}, i=1, \cdots, M$, $k=1, \cdots, N, j=1, \cdots, m$, such that
$$
\begin{gathered}
\mid \mathcal{G}(u)(y)-\sum_{k=1}^{N} \sum_{i=1}^{M} c_{i}^{k} \sigma\left(\sum_{j=1}^{m} \xi_{i j}^{k} u\left(x_{j}\right)+\theta_{i}^{k}\right) 
\cdot \sigma\left(\omega_{k} \cdot y+\zeta_{k}\right) \mid<\epsilon
\end{gathered}
$$
holds for all $u \in V$ and $y \in \Omega_2$.
\end{thm}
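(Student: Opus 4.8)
The plan is to deduce the operator statement from the classical scalar universal approximation theorem of Cybenko, Hornik, and Leshno--Lin--Pinkus--Schocken --- namely that, since $\sigma$ is continuous and non-polynomial, finite sums $\sum_i c_i\,\sigma(w_i\cdot z+b_i)$ are dense in $C(K)$ for every compact $K\subset\mathbb{R}^k$ --- by performing two successive finite-dimensional reductions: one in the output variable $y$ (the ``trunk'' direction) and one in the input function $u$ (the ``branch'' direction). First I would record the two compactness facts that drive everything: $\mathcal{G}(V)$ is a compact subset of $C(\Omega_2)$ because $\mathcal{G}$ is continuous and $V$ is compact, and $V$ itself is bounded and equicontinuous by the Arzel\`a--Ascoli theorem; continuity of $\mathcal{G}$ on the compact set $V$ also gives uniform continuity, which I will use throughout.

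\emph{Trunk reduction.} Enumerate a countable family $\{\psi_\ell(y)=\sigma(\omega_\ell\cdot y+\zeta_\ell)\}_{\ell\ge1}$ with rational parameters whose linear span is dense in $C(\Omega_2)$ (by the scalar theorem plus continuity of $\sigma$), and let $S_N=\mathrm{span}\{\psi_1,\dots,\psi_N\}$. The functions $w\mapsto \mathrm{dist}_{C(\Omega_2)}(w,S_N)$ are $1$-Lipschitz, nonincreasing in $N$, and converge pointwise to $0$ on the compact set $\mathcal{G}(V)$, so by Dini's theorem the convergence is uniform; hence for $N$ large every $\mathcal{G}(u)$, $u\in V$, lies within $\epsilon/3$ of $S_N$. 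Using a finite subcover of $\mathcal{G}(V)$ and a subordinate partition of unity I would build a \emph{continuous} map $Q:\mathcal{G}(V)\to S_N$ with $\|Q(w)-w\|_\infty<\epsilon/2$; writing $Q(w)=\sum_{k=1}^N \beta_k(w)\psi_k$ in a basis (discard linearly dependent $\psi_\ell$) and setting $a_k=\beta_k\circ\mathcal{G}$ yields continuous scalar functionals $a_k:V\to\mathbb{R}$ with $\bigl\|\mathcal{G}(u)(\cdot)-\sum_{k=1}^N a_k(u)\,\psi_k(\cdot)\bigr\|_\infty<\epsilon/2$ for all $u\in V$. It then remains only to approximate each $a_k$ on the compact set $V\subset C(\Omega_1)$.

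\emph{Branch reduction.} By equicontinuity, for a tolerance $\eta>0$ choose sensors $x_1,\dots,x_m\in\Omega_1$ and a continuous partition of unity $\{\phi_j\}_{j=1}^m$ on $\Omega_1$ with each $\mathrm{supp}\,\phi_j$ of diameter $<\delta(\eta)$, so that $T_m u:=\sum_j \phi_j(\cdot)\,u(x_j)$ satisfies $\|u-T_m u\|_\infty<\eta$ for every $u\in V$. Since $T_m u$ depends on $u$ only through the vector $\mathbf{b}(u)=(u(x_1),\dots,u(x_m))$, any two functions in $V$ with nearby sensor vectors are uniformly close, so by uniform continuity of $a_k$ the prescription $g_k(\mathbf{b}(u)):=a_k(u)$ is well defined up to an arbitrarily small error and extends to a genuinely continuous function $g_k$ on the compact set $K=\mathbf{b}(V)\subset\mathbb{R}^m$ (Tietze-extend to a cube if convenient). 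Applying the scalar universal approximation theorem to $g_k$ on $K$ produces coefficients $c_i^k,\xi_{ij}^k,\theta_i^k$ with $\bigl|g_k(\mathbf{b})-\sum_{i=1}^{M}c_i^k\,\sigma(\sum_j \xi_{ij}^k\mathbf{b}_j+\theta_i^k)\bigr|$ as small as desired on $K$ (pad with zero coefficients so one $M$ serves all $k$). Substituting $\mathbf{b}=\mathbf{b}(u)$, multiplying by $\psi_k(y)=\sigma(\omega_k\cdot y+\zeta_k)$, summing over $k$, and choosing $\eta$ and the branch tolerances small relative to $\epsilon/(N\max_k\|\psi_k\|_\infty)$ delivers the claimed estimate, in exactly the stated double-sum form with $m$ sensors.

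\emph{Main obstacle.} The delicate part is the two uniformity statements buried in the reductions. In the trunk step one needs a single finite $N$ and a single \emph{continuous} coefficient map $Q$ valid for all $u\in V$ simultaneously; this is handled by Dini's theorem on the compact set $\mathcal{G}(V)$ together with a partition-of-unity selection, since a metric projection onto $S_N$ is not available in the sup norm. In the branch step one must show that the functional $a_k$ factors, up to arbitrarily small error, through the finitely many point evaluations $u\mapsto(u(x_1),\dots,u(x_m))$ even though this evaluation map need not be injective on $V$ --- and this is precisely where equicontinuity of $V$ enters, via the reconstruction operator $T_m$. Once these are secured, the remainder is routine bookkeeping of the $\epsilon$'s across the three triangle-inequality terms.
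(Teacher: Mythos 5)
The paper does not actually prove this theorem: it is quoted (up to notation) from Chen and Chen \cite{392253}, so there is no in-paper argument to compare against. Your proposal is essentially a correct reconstruction of the Chen--Chen proof: their argument likewise first approximates $\mathcal{G}(u)(y)$ in the $y$-direction by finitely many ridge functions $\sigma(\omega_k\cdot y+\zeta_k)$ with coefficients that are continuous functionals of $u$ (your Dini-plus-partition-of-unity construction of the continuous quasi-projection $Q$ is a legitimate substitute for their explicit coefficient functionals, and could even be replaced by plain total boundedness of $\mathcal{G}(V)$), and then approximates each such functional through finitely many point evaluations using boundedness and equicontinuity of the compact set $V$ (Arzel\`a--Ascoli) followed by the finite-dimensional density theorem for continuous non-polynomial $\sigma$; your tolerance bookkeeping $\epsilon/2 + N\cdot(\text{branch tolerance})\cdot\max_k\|\psi_k\|_\infty$ is also correct. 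The one step you should tighten is the definition of $g_k$: since the evaluation map $u\mapsto(u(x_1),\dots,u(x_m))$ need not be injective on $V$, the prescription $g_k(\mathbf{b}(u)):=a_k(u)$ is not yet a function, and Tietze-extending it from $K=\mathbf{b}(V)$ presupposes exactly the well-definedness in question. The standard repair --- which your own reconstruction operator already suggests --- is to extend $a_k$ (not $g_k$) continuously from the compact set $V$ to all of $C(\Omega_1)$ by Tietze/Dugundji, and to define $g_k(\lambda):=\tilde a_k\bigl(\sum_{j}\lambda_j\phi_j\bigr)$ on the cube $[-R,R]^m$ with $R=\sup_{u\in V}\|u\|_\infty$; then $g_k$ is genuinely continuous, $g_k(\mathbf{b}(u))=\tilde a_k(T_m u)$, and $|\tilde a_k(T_m u)-a_k(u)|$ is uniformly small because $\tilde a_k$ is uniformly continuous on a neighborhood of the compact set $V$ (finite-subcover argument, which also removes any circularity between the choice of $\eta$ and of the sensors) while $\|T_m u-u\|_\infty<\eta$ for all $u\in V$. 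With that adjustment your argument is complete and coincides in structure with the cited original.
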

\begin{figure}[tbh]
     \centering
     \begin{subfigure}{0.8\textwidth}
         \centering
         \includegraphics[width=\textwidth]{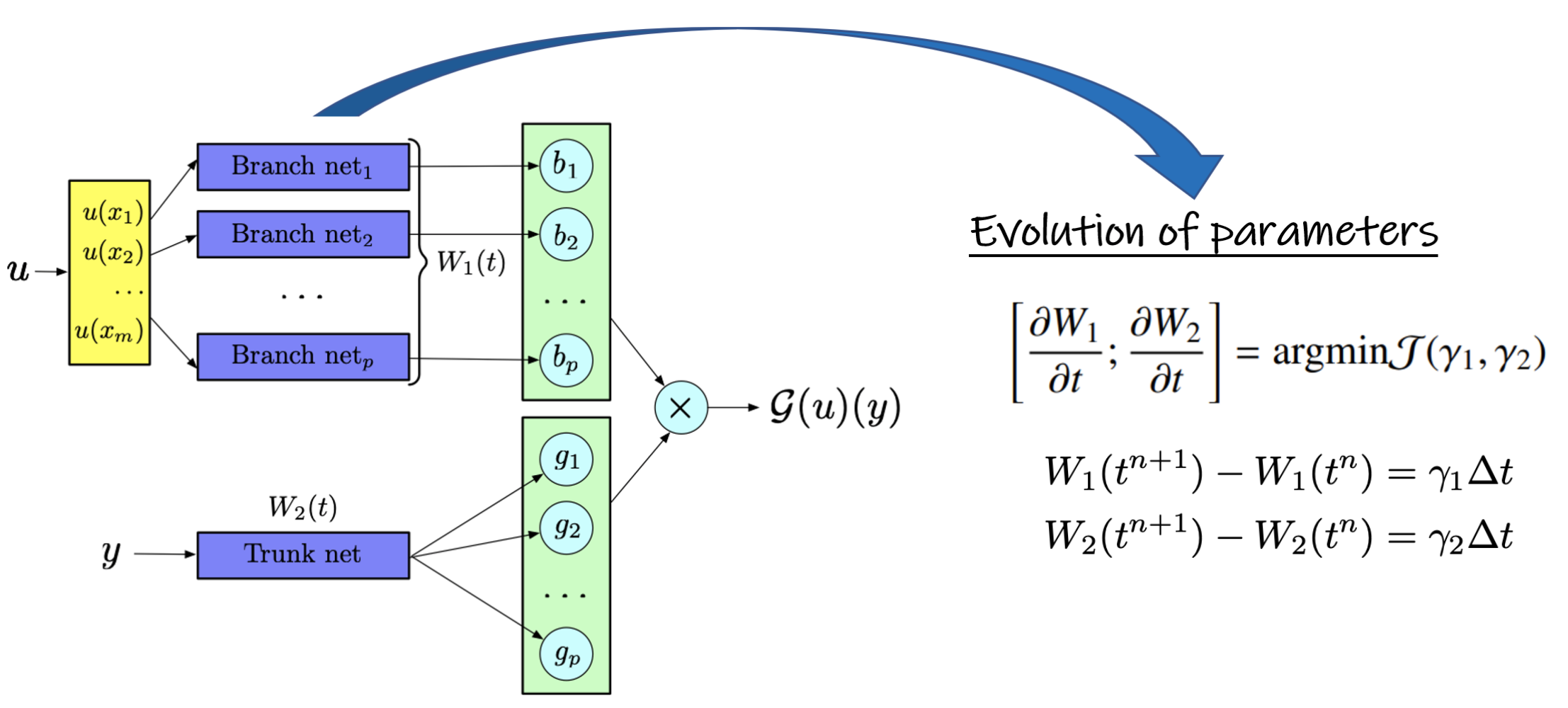}
         \caption{Stacked EDE-DeepONet}
         \label{fig:m=2, dt = 0.1}
     \end{subfigure}
     %\hill
     \begin{subfigure}{0.8\textwidth}
    \centering
    \includegraphics[width=\textwidth]{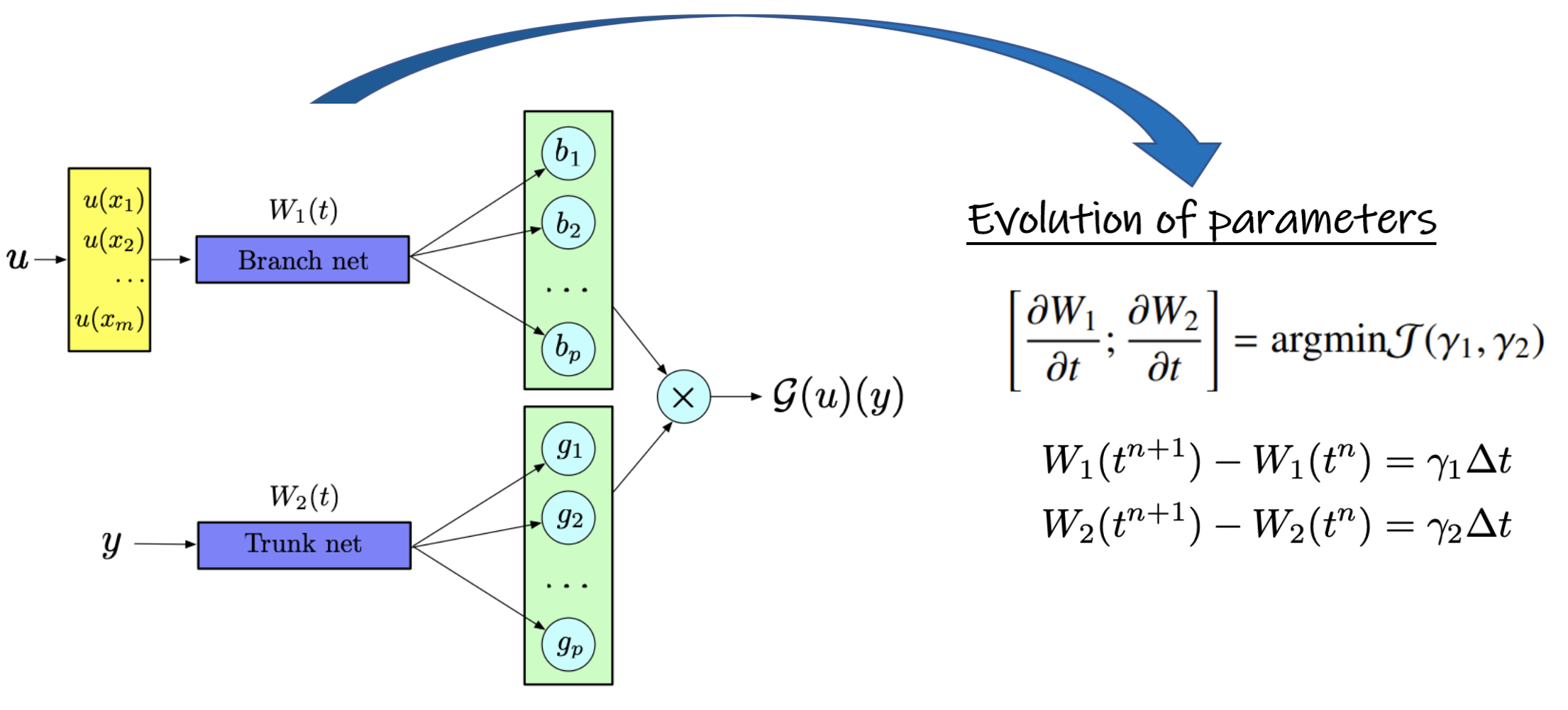}
    \caption{Unstacked EDE-DeepONet}
     \label{fig:m=2, dt = 0.1}
    \end{subfigure}
    \caption{Energy-Dissipative Evolutionary Deep Operator Neural Network. The yellow block represents input at sensors and the blue block represents subnetworks. The green blocks represent the output of the subnetworks and also the last layer of the EDE-DeepONet. The difference between the stacked and unstacked EDE-DeepONet is the number of Branch nets. In the right minimization problem, the energy term $r^2$ can be shown to be dissipative, i.e. $(r^{n+1})^2 \leq (r^{n})^2$, where $   \mathcal{J}(\gamma_1, \gamma_2 )=\frac{1}{2} \left\|\sum_{k=1}^{p}\frac{\partial {g_k(W^n_1)}}{\partial W^n_1}\gamma_1 \boldsymbol{b}_k(W^n_2) +  \sum_{k=1}^{p}g_k(W^n_1) \frac{\partial {\boldsymbol{b}_k(W^n_2)}}{\partial W^n_2} \gamma_2 -\frac{r^{n+1}}{\sqrt{E({\boldsymbol{u^n}})}} \mathcal{N}_{\boldsymbol{x}}({\boldsymbol{u^n}})\right\|_{2}^{2}$.}
\end{figure}
 For any time-dependent PDE, the training data is the form of $(u,y, \mathcal{G}(u)(y))$, where $u$ in the discrete form can be represented as $[u(x_1), u(x_2), \cdots, u(x_m)]$ in the neural network. In the original paper, they used the classic FNN\cite{raissi2018multistep} as the baseline model. For dynamic systems, various network architectures are used, including residual networks\cite{qin2021deep}, convolutional NNs(CNNs)\cite{winovich2019convpde, zhu2019physics}, recurrent NNs(RNNs)\cite{del2021learning}, neural jump stochastic differential equations\cite{jia2019neural} and neural ordinary differential equations\cite{chen2018neural}. The training performance is very promising. It predicts accurate solutions of many nonlinear ODEs and PDEs, including the simple dynamic system, gravity pendulum system, and diffusion-reaction system. However, the training data need to be generated at each time step, so it is very expensive to train the network. For a lot of initial value problems, there is no any information of $u(x,t)$ except $t=0$. It is very natural to raise a question: \textit{Can we learn an operator of a kind of time-dependent PDEs with only initial conditions?}\\
 Inspired by the Evolutionary Deep Neural Network(EDNN)\cite{du2021evolutional}, it is more convenient to learn an operator at a fixed time instead of an operator with not only spatial variables but also a time variable. With the loss of generality, we can take the time variable $t$ to be $0$ in initial value problems. Once obtained the operator at the initial time, many traditional numerical methods can be used to update the solution. More specifically, assuming that the initial condition operator has been trained well, we can consider the parameters of the Branch net and the Trunk net as a function with respect to the time variable as shown in Figure 1. More specifically, for a given initial value problem,
\begin{equation}
    \left\{
    \begin{aligned}
    &\frac{\partial u}{\partial t} = s(u) \\
    &u(x,0) = f(x), \quad x\in \omega
    \end{aligned}
    \right.
\end{equation}
the objective is to approximate the operator $\mathcal{G}: u \mapsto \mathcal{G}(u)$. The input is $([u(x_1),u(x_2), \cdots, u(x_m)],y,\mathcal{G}(u)(y))$, where $\{ x_1,x_2,\cdots,x_m\}$ are the sensors and $\mathcal{G}(u)(y) = f(y)$. The training process at the initial step is the same as the DeepONet, so we can use the same architecture to train the initial condition operator. The output of the Branch net can be written as ${\boldsymbol{b}} = \boldsymbol{b}(u(x_1,0), u(x_2,0), \cdots, u(x_{m_1},0)) =  \boldsymbol{b^{'}}(x_1, x_2, \cdots, x_{m_1}; W_1)$, where $W_1$ are the parameters in the Branch net. The output of the Trunk net can be written as ${ \boldsymbol{g}} = \boldsymbol{g}(y;W_2)$, where $W_2$ are the parameters in the
Trunk net. Once trained well, we will 
regard the parameters as a function of $t$ and $W_1$, $W_2$ as the initial conditions of $W_1(t)$ and $W_2(t)$. By the architecture of the Unstacked DeepONet, we can write the solution at initial time $t_0=0$ as 
 \begin{equation}
     u(x,t_0) \approx \sum_{j=1}^{p} b_jg_j  = \boldsymbol{b}^T  \boldsymbol{g} \text{ for any given initial condition } f(x)
 \end{equation} 
 We do not need any more data to obtain the approximation of $u(x,t_1)$. $u(x,t_1)$ should be consistent with $W_1(t_1)$ and $W_2(t_1)$. With the idea of the numerical solver for PDEs, it is easy to obtain $W_1(t_1)$ and $W_2(t_1)$ if $\frac{\partial W_1}{\partial t}$ and $\frac{\partial W_2}{\partial t}$ are known. The time derivative of the solution $u$ can be written by a chain rule:
 \begin{equation}
     \frac{\partial u}{\partial t} = \frac{\partial u}{\partial W}\frac{\partial W}{\partial t}
 \end{equation}
 where $W$ consists of $W_1$ and $W_2$. $\frac{\partial W}{\partial t}$ can be solved by a least square problem. Once we get  $\frac{\partial W}{\partial t}$, we can use any traditional time discretization schemes to get $W^{n+1}$ with $W^n$.\\
 The choice of the traditional time discretization scheme is dependent on the specific problem. The Euler or Runge–Kutta methods are commonly used in the evolutionary network. We are going to introduce a method with unconditional energy dissipation, which is the Energy-Dissipative Evolutionary Deep Operator Neural Network(EDE-DeepONet). Many kinds of PDEs are derived from basic physical laws, such as Netwon's Law, Conservation Law and Energy Dissipation Law. In many areas of science and engineering, particularly in the field of materials science, gradient flows are commonly employed in mathematical models\cite{allen1979microscopic, anderson1998diffuse, cahn1958free, doi1988theory, elder2002modeling, gurtin1996two, leslie1979theory, yue2004diffuse}. When approximating the solution of a certain PDE, it is desirable to satisfy these laws. We consider a gradient flow problem, 
 \begin{equation}
     \frac{\partial {u}}{\partial t} = - \frac{\delta E}{\delta u},
 \end{equation}
  where $E$ is a certain free energy functional. Since the general explicit Euler method does not possess the unconditionally dissipative energy dissipation law, we applied a scalar auxiliary variable(SAV) method\cite{shen2018scalar} to generate the required least square problem. It introduces a new modified energy and the unconditionally dissipative modified energy dissipation law is satisfied for each iterative step. SAV method has been applied to solve plenty of PDEs with thermodynamically consistent property. It is robust, easy to implement and accurate to predict the solution. Introducing this method to neural network helps us explore how to combine neural network models and physical laws.\\
   The objectives of this article is:
 \begin{itemize}
     \item Designing an operator learning neural network without data except the given information.
     \item Predicting solutions of parametric PDEs after a long time period.
     \item Keeping energy dissipative property of a dynamic system.
 \end{itemize}
 Our main contributions are:
 \begin{itemize}
     \item Constructing an evolutionary operator learning neural network to solve PDEs.
     \item Solving a kind of PDEs with different parameters in a single neural network.
     \item Introducing the modified energy in the neural network and applying SAV algorithm to keep the unconditionally modified energy dissipation law.
     \item Introducing an adaptive time stepping strategy and restart strategy in order to speed the training process.
 \end{itemize}
 The organization of this paper is as follows: In Section 2, we introduce the Evolutionary Deep Operator Neural Network for a given PDE problem. In Section 3, we consider the physics law behind the gradient flow problem and apply the SAV method to obtain the energy dissipation law. We proposed a new architecture for neural network, EDE-DeepONet. In Section 4, we presented two adaptive time stepping strategies, where the second one is called restart in some cases. In Section 5, we generally introduced the architecture of the EDE-DeepONet. In Section 6, we implement our neural network to predict solutions of heat equations, parametric heat equations, and Allen-Cahn equations to show the numerical results.
 \section{Evolutionary Deep Operator Neural Network}
 Consider a general gradient flow problem,
 \begin{equation}
 \begin{aligned}
    &\frac{\partial \boldsymbol{u}}{\partial t}+\mathcal{N}_{\boldsymbol{x}}(\boldsymbol{u})=0\\
    &\boldsymbol{u}(\boldsymbol{x}, 0) = \boldsymbol{f}(\boldsymbol{x})
 \end{aligned}
 \end{equation}
 where $\boldsymbol{u}\in \boldsymbol{R}^l$, $\mathcal{N}_{\boldsymbol{x}}(\boldsymbol{u})$ can be written as a variational derivative of a free energy functional $E[u(\boldsymbol{x})]$ bounded from below, $\mathcal{N}_{\boldsymbol{x}}(\boldsymbol{u}) = \frac{\delta E}{\delta \boldsymbol{u}}$. The first step is to approximate the initial condition operator with DeepONet.
 \subsection{Operator learning}
For an operator $\mathcal{G}$, $\mathcal{G}: \boldsymbol{u}(\boldsymbol{x}) \mapsto \boldsymbol{f}(\boldsymbol{x})$, the data feed into the DeepONet is in the form $(\boldsymbol{u}, y, \mathcal{G}(\boldsymbol{u})(y))$. It is obtained by the given initial conditions. The branch network takes $[\boldsymbol{u}(\boldsymbol{x_1}), \boldsymbol{u}(\boldsymbol{x_2}), \cdots, \boldsymbol{u}(\boldsymbol{x_m})]^T$ as the input, which is the numerical representation of $\boldsymbol{u}$, and $[\boldsymbol{b}_1, \boldsymbol{b}_2, \cdots, \boldsymbol{b}_p]^T \in \boldsymbol{R}^{p\times l}$, where $\boldsymbol{b}_k\in \boldsymbol{R}^l \text{ for } k=1,2, \cdots, p$, as outputs. The trunk network takes $\boldsymbol{y}$ as the input and $[g_1, g_2, \cdots, g_p] \in \boldsymbol{R}^p$ as outputs. The Unstacked DeepONet net uses FNN as the baseline model and concatenate the function value at sensor locations and the evaluated point together, i.e. $[\boldsymbol{u}(\boldsymbol{x_1}), \boldsymbol{u}(\boldsymbol{x_2}), \cdots, \boldsymbol{u}(\boldsymbol{x_m}), \boldsymbol{y}]^T$. As the equation in the Universal Approximation Theorem for Operators, we can take the product of $\boldsymbol{h}$ and $t$, then we obtain:
  \begin{equation}
      \mathcal{G}(\boldsymbol{u})({\boldsymbol{x}}) \approx \sum_{k=1}^{p} g_k\boldsymbol{b}_k
  \end{equation}
The activation functions are applied to the trunk net in the last layer. There is no bias in this network. However, according to the theorem 1, the generalization error can be reduced by adding bias. We also give the form with bias $\boldsymbol{b_0}$:
  \begin{equation}
      \mathcal{G}(\boldsymbol{u})({\boldsymbol{x}}) \approx \sum_{k=1}^{p} g_k\boldsymbol{b}_k + \boldsymbol{b_0}
  \end{equation}
  As mentioned before, we assumed the initial condition operator has been trained very well. We are going to find the update rule of the parameters to evolve the neural network.
  \subsection{The evolution of parameters in the neural network}
 Denoting the parameters in the branch network as $W_1$ and the parameters in the trunk network as $W_2$, $W_1$ and $W_2$ can be regarded a function of $t$ since they change in every time step. According to the derivative's chain rule, we have
  \begin{equation}
      \frac{\partial {\boldsymbol{u}}}{\partial t}=\frac{\partial {\boldsymbol{u}}}{\partial W_1} \frac{\partial {W_1}}{\partial t} + \frac{\partial {\boldsymbol{u}}}{\partial W_2} \frac{\partial {W_2}}{\partial t}
  \end{equation}
  Since $\boldsymbol{u} = \sum_{k=1}^{p} g_k\boldsymbol{b}_k = \sum_{k=1}^{p} g_k(W_1(t))\boldsymbol{b}_k(W_2(t))$, then
 \begin{equation}
    \begin{aligned}
        \frac{\partial {\boldsymbol{u}}}{\partial t} &=  \sum_{k=1}^{p}\frac{\partial {g_k(W_1(t))}}{\partial W_1} \frac{\partial {W_1}}{\partial t}\boldsymbol{b}_k(W_2(t)) +  \sum_{k=1}^{p}g_k(W_1(t)) \frac{\partial {\boldsymbol{b}_k(W_2(t))}}{\partial W_2} \frac{\partial {W_2}}{\partial t} 
    \end{aligned} 
 \end{equation} 
 Our objective is to obtain $\frac{\partial {W_1}}{\partial t}$ and $\frac{\partial {W_2}}{\partial t}$, the update rule for parameters. It is equivalent to solve a minimization problem,
 \begin{equation}
     \left[\frac{\partial {W_1}}{\partial t}; \frac{\partial {W_2}}{\partial t} \right]=\text{argmin} \mathcal{J}(\gamma_1, \gamma_2 )
 \end{equation}
 where
 \begin{equation}
     \mathcal{J}(\gamma_1, \gamma_2 )=\frac{1}{2} \left\|\sum_{k=1}^{p}\frac{\partial {g_k(W_1(t))}}{\partial W_1}\gamma_1 \boldsymbol{b}_k(W_2(t)) +  \sum_{k=1}^{p}g_k(W_1(t)) \frac{\partial {\boldsymbol{b}_k(W_2(t))}}{\partial W_2} \gamma_2 -\mathcal{N}_{\boldsymbol{x}}({\boldsymbol{u}})\right\|_{2}^{2} 
 \end{equation}
 In this article, the inner product $(a,b)$ is defined in the integral sense, $(a,b) = \int_{\Omega} a(\boldsymbol{x})b(\boldsymbol{x}) \mathrm{~d} \boldsymbol{x}$ and the $L_2$ norm is defined as $\norm{a}^2_2 = \int_{\Omega} |a(\boldsymbol{x})|^2 \mathrm{~d} \boldsymbol{x}$.\\
 The minimization problem can be transformed into a linear system by the first-order optimal condition:
 \begin{align}
     &\frac{\partial {\mathcal{J}}}{\partial \gamma_1} = \int_{\Omega}  \left(\sum_{k=1}^{p}\frac{\partial {g_k(W_1(t))}}{\partial W_1}\boldsymbol{b}_k(W_2(t)) \right)^T \left(\gamma_1 \sum_{k=1}^{p}\frac{\partial {g_k(W_1(t))}}{\partial W_1}\boldsymbol{b}_k(W_2(t))  + \sum_{k=1}^{p}g_k(W_1(t)) \frac{\partial {\boldsymbol{b}_k(W_2(t))}}{\partial W_2} \gamma_2 -\mathcal{N}_{\boldsymbol{x}}({\boldsymbol{u}}) \right) \mathrm{d} \boldsymbol{x}=0\\
     &\frac{\partial {\mathcal{J}}}{\partial \gamma_2} = \int_{\Omega}  \left(\sum_{k=1}^{p}g_k(W_1(t)) \frac{\partial {\boldsymbol{b}_k(W_2(t))}}{\partial W_2} \right)^T \left(\gamma_1 \sum_{k=1}^{p}\frac{\partial {g_k(W_1(t))}}{\partial W_1}\boldsymbol{b}_k(W_2(t))  + \sum_{k=1}^{p}g_k(W_1(t)) \frac{\partial {\boldsymbol{b}_k(W_2(t))}}{\partial W_2} \gamma_2 -\mathcal{N}_{\boldsymbol{x}}({\boldsymbol{u}}) \right) \mathrm{d} \boldsymbol{x}=0
 \end{align}
 In this system, the gradient with respect to $W_1(t)$ and $W_2(t)$ can be computed by automatic differentiation at each time step. By denoting 
 \begin{align}
     &(\mathbf{J_1})_{i {j_1}}=\sum_{k=1}^{p}\frac{\partial {g_k(W_1(t))}}{\partial W^{j_1}_1}\boldsymbol{b}^i_k(W_2(t))\\
     &(\mathbf{J_2})_{i {j_2}}=\sum_{k=1}^{p}g_k(W_1(t)) \frac{\partial {\boldsymbol{b}^i_k(W_2(t))}}{\partial W^{j_2}_2}\\
     &(\mathbf{N})_{i}=\mathcal{N}\left(\boldsymbol{u}_{\boldsymbol{x}}^{i}\right)
 \end{align}
 where $i=1,2,\cdots,l$, $j_1=1,2,\cdots, N^b_{\text{para}}$, $j_2=1,2,\cdots, N^t_{\text{para}}$. $N^b_{\text{para}}$ is the number of parameters in Branch net and  $N^t_{\text{para}}$ is the number of parameters in Trunk net. $\mathbf{N}$ is generated by the DeepONet, so it can be evaluated at any spatial point. The above integrals can be approximated by numerical methods:
 \begin{align}
     &\frac{1}{|\Omega|}\int_{\Omega}  \left(\sum_{k=1}^{p}\frac{\partial {g_k(W_1(t))}}{\partial W_1}\boldsymbol{b}_k(W_2(t)) \right)^T \left( \sum_{k=1}^{p}\frac{\partial {g_k(W_1(t))}}{\partial W_1}\boldsymbol{b}_k(W_2(t)) \right) \mathrm{d} \boldsymbol{x} = \lim_{l \rightarrow \infty} \frac{1}{l}\mathbf{J^T_1}\mathbf{J_1}\\
     &\frac{1}{|\Omega|} \int_{\Omega}  \left(\sum_{k=1}^{p}g_k(W_1(t)) \frac{\partial {\boldsymbol{b}_k(W_2(t))}}{\partial W_2} \right)^T \left(  \sum_{k=1}^{p}g_k(W_1(t)) \frac{\partial {\boldsymbol{b}_k(W_2(t))}}{\partial W_2} \right) \mathrm{d} \boldsymbol{x} = \lim_{l \rightarrow \infty} \frac{1}{l}\mathbf{J^T_2}\mathbf{J_2}\\
     &\frac{1}{|\Omega|}\int_{\Omega}  \left(\sum_{k=1}^{p}\frac{\partial {g_k(W_1(t))}}{\partial W_1}\boldsymbol{b}_k(W_2(t)) \right)^T \left( \mathcal{N}_{\boldsymbol{x}}({\boldsymbol{u}}) \right) \mathrm{d} \boldsymbol{x} =  \lim_{l \rightarrow \infty} \frac{1}{l}\mathbf{J^T_1}\mathbf{N}
 \end{align}
 By denoting $\gamma_i^{o p t}$ as optimal values of $\gamma_i$, $i=1,2$, the objective function can be reduced to 
 \begin{align}
     &\mathbf{J^T_1}\left(\gamma_1^{o p t}\mathbf{J_1}  + \gamma_2^{o p t}\mathbf{J_2} -\mathbf{N} \right)  = 0\\
     &\mathbf{J^T_2}\left(\gamma_1^{o p t}\mathbf{J_1}  + \gamma_2^{o p t}\mathbf{J_2} -\mathbf{N} \right)  = 0
 \end{align}
The feasible solutions of the above equations are the approximated time derivatives of $W_1$ and $W_2$.
 \begin{align}
     &\frac{dW_1}{dt}  = \gamma_1^{o p t}\\
     &\frac{dW_2}{dt}  = \gamma_2^{o p t}
 \end{align}
 where the initial conditions $W_1^0$ and $W_2^0$ can be determined by DeepONets for initial condition operators. The two ODEs are the updated rules in the neural networks. The simple way to solve them is the explicit Euler method.
  \begin{align}
     &\frac{W^{n+1}_1 - W^{n}_1}{\Delta t}  = \gamma_1^{o p t}\\
     &\frac{W^{n+1}_2 - W^{n}_2}{\Delta t}  = \gamma_2^{o p t}
 \end{align}
 The neural network can calculate the solution of given PDEs at any time step $t_n$ and spatial point $\boldsymbol{x}_i$ by weights $W_1^n$ , $W_2^n$, spatial points $\boldsymbol{x}$ and initial condition ${\boldsymbol{u}}(\boldsymbol{x})$.
 
 \section{Energy Dissipative Evolutionary Deep Operator Neural Network}
 Let's reconsider the given problem. 
 \begin{equation}
 \begin{aligned}
    &\frac{\partial \boldsymbol{u}}{\partial t}+\mathcal{N}_{\boldsymbol{x}}(\boldsymbol{u})=0\\
    &\boldsymbol{u}(\boldsymbol{x}, 0) = \boldsymbol{f}(\boldsymbol{x})
 \end{aligned}
 \end{equation}
 where $\boldsymbol{u}\in \boldsymbol{R}^l$, $\mathcal{N}_{\boldsymbol{x}}(\boldsymbol{u})$ can be written as a variational derivative of a free energy functional $E[\boldsymbol{u}(\boldsymbol{x})]$ bounded from below, $\mathcal{N}_{\boldsymbol{x}}(\boldsymbol{u}) = \frac{\delta E}{\delta \boldsymbol{u}}$. Taking the inner product with $\mathcal{N}_{\boldsymbol{x}}(\boldsymbol{u})$ of the first equation, we obtain the energy dissipation property
  \begin{equation}
     \frac{dE[\boldsymbol{u}(\boldsymbol{x})]}{dt} = \left(\frac{\delta E}{\delta \boldsymbol{u}}, \frac{\partial \boldsymbol{u}}{\partial t}\right) = \left(\mathcal{N}_{\boldsymbol{x}}(\boldsymbol{u}), \frac{\partial \boldsymbol{u}}{\partial t}\right)= -\left(\mathcal{N}_{\boldsymbol{x}}(\boldsymbol{u}), \mathcal{N}_{\boldsymbol{x}}(\boldsymbol{u})\right) \leq 0
 \end{equation}
 However, it is usually hard for a numerical algorithm to be efficient as well as energy dissipative. Recently, the SAV approach \cite{shen2018scalar} was introduced to construct numerical schemes  which is energy dissipative (with a modified energy), accurate, robust and easy to implement. More precisely,  assuming  $E[\boldsymbol{u}(\boldsymbol{x})]>0$, it introduces a $r(t) = \sqrt{E[\boldsymbol{u}(\boldsymbol{x},t)]}$, and expands the gradient flow problem  as
 
\begin{equation}\label{expanded}
    \begin{aligned}
        &\frac{\partial \boldsymbol{u}}{\partial t}= - \frac{r}{\sqrt{E(\boldsymbol{u})}} \mathcal{N}_{\boldsymbol{x}}\left(\boldsymbol{u}\right)\\
        &r_{t}=\frac{1}{2 \sqrt{{E}(\boldsymbol{u})}} \left( \mathcal{N}_{\boldsymbol{x}}\left(\boldsymbol{u}\right), \frac{\partial \boldsymbol{u}}{\partial t} \right)
    \end{aligned}
\end{equation}
With $r(0)=\sqrt{E[\boldsymbol{u}(\boldsymbol{x},t)]}$, the above system has a solution $r(t)\equiv \sqrt{E[\boldsymbol{u}(\boldsymbol{x},t)]}$ and $\boldsymbol{u}$ being the solution of the original problem.

\subsection{First order scheme}
By setting $\boldsymbol{u}^{n} = \sum_{k=1}^{p} g_k\boldsymbol{b}_k$, a first order  scheme can be constructed as
 \begin{equation}
 \begin{aligned}
    &\frac{\boldsymbol{u}^{n+1} -  \boldsymbol{u}^{n}}{\Delta t} = -\frac{r^{n+1}}{\sqrt{E(\boldsymbol{u}^{n})}} \mathcal{N}_{\boldsymbol{x}}(\boldsymbol{u}^{n})\\
    &\frac{r^{n+1} - r^n}{\Delta t} = \frac{1}{2\sqrt{E(\boldsymbol{u}^{n})}}\int_{\Omega} \mathcal{N}_{\boldsymbol{x}}(\boldsymbol{u}^{n})\frac{\boldsymbol{u}^{n+1} - \boldsymbol{u}^{n}}{\Delta t} dx.
\end{aligned}
\end{equation}
  This is a coupled system of equations for $(r^{n+1},\boldsymbol{u}^{n+1})$. But it can be easily  decoupled as follows. Plugging the first equation into the second one, we obtain:
  \begin{equation}
      \frac{r^{n+1} - r^n}{\Delta t} = -\frac{r^{n+1}}{2{E(\boldsymbol{u}^{n})}}  \norm{\mathcal{N}_{\boldsymbol{x}}(\boldsymbol{u}^{n})}^2,
  \end{equation}
which implies
   \begin{equation}
      r^{n+1} =\left(1+\frac{\Delta t }{2{E(\boldsymbol{u}^{n})}}  \norm{\mathcal{N}_{\boldsymbol{x}}(\boldsymbol{u}^{n})}^2 \right)^{-1} r^n 
  \end{equation}
 \begin{thm}[Discrete Energy Dissipation Law]
    With the modified energy define above, the scheme is unconditionally energy stable, i.e.
    \begin{equation}
        (r^{n+1})^2 - (r^{n})^2 \leq 0.
    \end{equation}
 \end{thm}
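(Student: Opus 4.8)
The plan is to work directly from the decoupled scalar recursion for $r^{n+1}$ that was derived just above the statement, and to mimic the standard SAV energy‑stability argument, which turns the first‑order update into a telescoping inequality for $(r^n)^2$. There are essentially two equivalent routes, and I would present the one that also exposes the numerical dissipation explicitly.

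First I would recall the coupled scheme and the consequence obtained by substituting the first equation into the second, namely
\begin{equation}
\frac{r^{n+1} - r^n}{\Delta t} = -\frac{r^{n+1}}{2E(\boldsymbol{u}^n)}\,\norm{\mathcal{N}_{\boldsymbol{x}}(\boldsymbol{u}^n)}_2^2 .
\end{equation}
Then I would multiply both sides by $2\Delta t\, r^{n+1}$. Using the algebraic identity $2a(a-b) = a^2 - b^2 + (a-b)^2$ with $a = r^{n+1}$, $b = r^n$, the left‑hand side becomes $(r^{n+1})^2 - (r^n)^2 + (r^{n+1}-r^n)^2$, so that
\begin{equation}
(r^{n+1})^2 - (r^n)^2 + (r^{n+1}-r^n)^2 = -\frac{\Delta t\,(r^{n+1})^2}{E(\boldsymbol{u}^n)}\,\norm{\mathcal{N}_{\boldsymbol{x}}(\boldsymbol{u}^n)}_2^2 .
\end{equation}
Invoking the standing assumptions that $\Delta t > 0$ and $E[\boldsymbol{u}(\boldsymbol{x})] > 0$ (so in particular $E(\boldsymbol{u}^n) > 0$), the right‑hand side is $\le 0$; dropping the nonnegative term $(r^{n+1}-r^n)^2$ from the left then yields $(r^{n+1})^2 - (r^n)^2 \le -(r^{n+1}-r^n)^2 \le 0$, which is the claim. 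Alternatively, and more quickly, I could simply observe from the closed form $r^{n+1} = \bigl(1 + \tfrac{\Delta t}{2E(\boldsymbol{u}^n)}\norm{\mathcal{N}_{\boldsymbol{x}}(\boldsymbol{u}^n)}_2^2\bigr)^{-1} r^n$ that the amplification factor lies in $(0,1]$, hence squares to something in $(0,1]$, giving $(r^{n+1})^2 \le (r^n)^2$ directly; I would mention this as the short version and keep the identity‑based computation as the one that also shows the scheme adds dissipation of size $(r^{n+1}-r^n)^2$.

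There is no genuine obstacle here: the result is an immediate consequence of the decoupling already performed in the text, and the only things that must be stated cleanly are the positivity hypotheses $E>0$ and $\Delta t>0$ and the fact that $\norm{\mathcal{N}_{\boldsymbol{x}}(\boldsymbol{u}^n)}_2^2 = (\mathcal{N}_{\boldsymbol{x}}(\boldsymbol{u}^n),\mathcal{N}_{\boldsymbol{x}}(\boldsymbol{u}^n)) \ge 0$ in the integral inner product fixed earlier. The mild subtlety worth flagging is that this is \emph{modified}-energy dissipation: $r^n$ plays the role of $\sqrt{E}$ only at the continuous level, and one should not over‑claim that $E(\boldsymbol{u}^{n+1}) \le E(\boldsymbol{u}^n)$ for the original energy — the estimate proved is exactly the discrete statement $(r^{n+1})^2 \le (r^n)^2$, consistent with how the SAV literature phrases it.
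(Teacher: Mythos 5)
Your proposal is correct and follows essentially the same route as the paper: multiplying the $r$-update by $2r^{n+1}$ and invoking the identity $2a(a-b)=a^{2}-b^{2}+(a-b)^{2}$, with the substitution of the $\boldsymbol{u}$-equation into the $r$-equation done beforehand (via the decoupled recursion) rather than inside the proof, which is an immaterial reordering. Your short alternative via the amplification factor $\bigl(1+\tfrac{\Delta t}{2E(\boldsymbol{u}^n)}\norm{\mathcal{N}_{\boldsymbol{x}}(\boldsymbol{u}^n)}_2^2\bigr)^{-1}\in(0,1]$ is also valid, and your identity-based computation even keeps the $\Delta t$ factor that the paper's final line drops.
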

 \begin{proof}
 Taking the inner product of the first equation with $\frac{r^{n+1}}{\sqrt{E(\boldsymbol{u}^n)}} \mathcal{N}_{\boldsymbol{x}}(\boldsymbol{u}^n)$ and the second equation with $2r^{n+1}$
 \begin{equation}
     \begin{aligned}
        (r^{n+1})^2 - (r^{n})^2 &= 2r^{n+1}(r^{n+1} - r^{n})- (r^{n+1} - r^{n})^2\\
        &=\frac{\Delta tr^{n+1}}{\sqrt{E(\boldsymbol{u}^n)}}\int_{\Omega} \mathcal{N}_{\boldsymbol{x}}(\boldsymbol{u}^n)\frac{\boldsymbol{u}^{n+1} - \boldsymbol{u}^n}{\Delta t} dx- (r^{n+1} - r^{n})^2\\
        &=-\left(\frac{r^{n+1}}{\sqrt{E(\boldsymbol{u}^n)}}\right)^2\int_{\Omega} \mathcal{N}_{\boldsymbol{x}}(\boldsymbol{u}^n) \mathcal{N}_{\boldsymbol{x}}(\boldsymbol{u}^n) dx- (r^{n+1} - r^{n})^2\\
        &\leq 0
     \end{aligned}
 \end{equation}
 \end{proof}
 
In order to maintain the modified energy dissipation law in the evolution neural network, we only need to replace $\mathcal{N}_{\boldsymbol{x}}({\boldsymbol{u}})$ by $\frac{r^{n+1}}{\sqrt{E({\boldsymbol{u}})}} \mathcal{N}_{\boldsymbol{x}}({\boldsymbol{u}})$ in section 2. The update rule of the neural network is
\begin{equation}
     \left[\frac{\partial {W_1}}{\partial t}; \frac{\partial {W_2}}{\partial t} \right]=\text{argmin} \mathcal{J}(\gamma_1, \gamma_2 )
 \end{equation}
 where
 \begin{equation}
     \mathcal{J}(\gamma_1, \gamma_2 )=\frac{1}{2} \left\|\sum_{k=1}^{p}\frac{\partial {g_k(W^n_1)}}{\partial W^n_1}\gamma_1 \boldsymbol{b}_k(W^n_2) +  \sum_{k=1}^{p}g_k(W^n_1) \frac{\partial {\boldsymbol{b}_k(W^n_2)}}{\partial W^n_2} \gamma_2 -\frac{r^{n+1}}{\sqrt{E({\boldsymbol{u^n}})}} \mathcal{N}_{\boldsymbol{x}}({\boldsymbol{u^n}})\right\|_{2}^{2}
 \end{equation}
 The corresponding linear system of the first order optimal condition is 
 \begin{align}
     &\mathbf{J^T_1}\left(\gamma_1^{o p t}\mathbf{J_1}  + \gamma_2^{o p t}\mathbf{J_2} -\frac{r^{n+1}}{\sqrt{E({\boldsymbol{u^n}})}} \mathbf{N} \right)  = 0\\
     &\mathbf{J^T_2}\left(\gamma_1^{o p t}\mathbf{J_1}  + \gamma_2^{o p t}\mathbf{J_2} -\frac{r^{n+1}}{\sqrt{E({\boldsymbol{u^n}})}} \mathbf{N} \right)  = 0
 \end{align}
 where
 \begin{align}
     &(\mathbf{J_1})_{i {j_1}}=\sum_{k=1}^{p}\frac{\partial {g_k(W_1^n)}}{\partial W^{n,j_1}_1}\boldsymbol{b}^i_k(W_2^n)\\
     &(\mathbf{J_2})_{i {j_2}}=\sum_{k=1}^{p}g_k(W^n_1) \frac{\partial {\boldsymbol{b}^i_k(W^n_2)}}{\partial W^{n, j_2}_2}\\
     &(\mathbf{N})_{i}=\mathcal{N}\left(\boldsymbol{u}_{\boldsymbol{x}}^{i}\right)
 \end{align}
 and $i=1,2,\cdots,l$, $j_1=1,2,\cdots, N^b_{\text{para}}$, $j_2=1,2,\cdots, N^t_{\text{para}}$. $N^b_{\text{para}}$ is the number of parameters in Branch net and  $N^t_{\text{para}}$ is the number of parameters in Trunk net.
 After getting $\gamma_1^{o p t}$ and $\gamma_2^{o p t}$, $W^{n+1}$ can be obtained by the Forward Euler method as equation (24) and (25).
   \begin{align}
     &W^{n+1}_1   =W^{n}_1 +  \gamma_1^{o p t} {\Delta t}\\
     &W^{n+1}_2   =W^{n}_2 +  \gamma_2^{o p t} {\Delta t}
 \end{align}
 
 \section{Adaptive time stepping strategy and Restart strategy}
 One of the advantages of an unconditionally stable scheme is that the adaptive time step can be utilized. Since the coefficient of $N_x$, $\frac{r^{n+1}}{\sqrt{E^n}}$ should be around 1, by denoting $\xi^{n+1} = \frac{r^{n+1}}{\sqrt{E^n}}$,  larger $\Delta t$ is allowed when $\xi$ is close to $1$ and the smaller $\Delta t$ is needed when $\xi$ is far away from $1$. Thus, a simple  adaptive time-stepping strategy can be described as follows:\\
 \begin{algorithm}[H]
 \caption{Adaptive time stepping strategy}
 \begin{algorithmic}
     \item 1. Set the tolerance for $\xi$ as $\epsilon_0$ and $\epsilon_1$, the initial time step $\Delta t$, the maximum time step $\Delta t_{max}$ and the minimum time step $\Delta t_{min}$
     \item 2. Compute $u^{n+1}$.
     \item 3. Compute $\xi^{n+1} = \frac{r^{n+1}}{\sqrt{E^n}}$.
     \item 4. \textbf{If} $|1-\xi^{n+1}|>\epsilon_0$,\\
     \quad \quad \textbf{Then} $\Delta t = \max(\Delta t_{min}, \Delta t /2)$;\\
     \quad \textbf{Else if} $|1-\xi^{n+1}|<\epsilon_1$,\\
     \quad \quad \textbf{Then} $\Delta t = \min(\Delta t_{max}, 2\Delta t )$.\\
     \quad \textbf{Go to Step 2}.
     \item 5. Update time step $\Delta t$.
 \end{algorithmic}
 \end{algorithm}
 Another popular strategy to keep $r$ approximating the original energy $E$ is to reset the SAV $r^{n+1}$ to be $E^{n+1}$ in some scenarios. The specific algorithm is as following:\\
 \begin{algorithm}[H]
 \caption{Restart strategy}
 \begin{algorithmic}
     \item 1. Set the tolerance for $\xi$ as $\epsilon_2$.
     \item 2. Compute $u^{n+1}$.
     \item 3. Compute $\xi^{n+1} = \frac{r^{n+1}}{\sqrt{E^n}}$.
     \item 4. \textbf{If} $|1-\xi^{n+1}|>\epsilon_2$,\\
     \quad \quad \textbf{Then} $r^{n+1} = \sqrt{E^{n+1}}$ and 
     \textbf{Go to Step 2}.
     \item 5. Go to next iteration.
 \end{algorithmic}
 \end{algorithm}
 The choice for $\epsilon_0$, $\epsilon_1$ should be some small tolerance, usually $10^{-1}$ and $10^{-3}$. The choices for $\Delta t_{max}$ and $\Delta t_{min}$ are quite dependent on $\Delta t$, usually $\Delta t_{max} = 10^3 \times \Delta t$ and $\Delta t_{min} = 10^{-3}\times \Delta t$. In Algorithm 2, we usually take $\epsilon_2$ as $2\times 10^{-2}$.
 
 \section{Algorithm for EDE-DeepONet}
 A general approach to solving a time-dependent PDE with EDE-DeepONet can be summarized in Algorithm 3.
\begin{algorithm}[H]
\caption{Energy-Dissipative Evolutionary Deep Operator Neural Networks(EDE-DeepONet)}
\begin{algorithmic}
    \item 1. Generate input data samples in the form of $(u,y,\mathcal{G}(u)(y))$ for the DeepONet, where $\mathcal{G}$ is the objective operator. Each specific input function $u$ can be generated in the same sensor locations $\{x_1, x_2, \cdots,x_m \}$.
    \item 2. Feed $[u(x_{1}),u(x_{2}), \cdots, u(x_{m})])$ into the branch network and $y \in Y$ into the trunk network. Denote the output of the DeepONet as $q$.
    \item 3. Update the parameters in the DeepONet by minimizing a cost function, where the cost function can be taken as the mean squared error as $\frac{1}{|Y|}\sum_{y\in Y} \norm{\mathcal{G}(u)(y)-q}^2$.
    \item 4. Once the DeepONet has been trained well, solve the system of equations of (36) and (37) to obtain $\left[\frac{\partial {W_1}}{\partial t}; \frac{\partial {W_2}}{\partial t} \right]$.
    \item 5.The value of $\left[\frac{\partial {W_1}}{\partial t}; \frac{\partial {W_2}}{\partial t} \right]$ can be obtained in the current step. Since the parameters $W_1^n$ in the branch network, and $W_2^n$ in the trunk network are known, $W_1^{n+1}$ and $W_2^{n+1}$ for the next step can be also obtained by the Forward Euler method or Runge-Kutta method.
    \item 6. Repeat step 5 until the final time $T$, where $T=t_0 + s\Delta t$, $t_0$ is the initial time of the given PDE, $\Delta t$ is the time step in step 5 and $s$ is the number of repeated times of step 5.  
    \item 7. Output the solution at time $T$ in the DeepONet with parameters obtained in step 6.
\end{algorithmic}
\end{algorithm}

\section{Numerical Experiments}
In this section, we implement EDE-DeepONet to solve heat equations, parametric heat equations, and Allen-Cahn equations to show its performance and accuracy. 
\subsection{Example 1: Simple heat equations}
To show the accuracy of the EDE-DeepONet, we start with the simple heat equation with different initial conditions since we already have the exact solution.
A 1D heat equation system can be described by
\begin{align}
    &u_t =  u_{xx}&\\
    &u(x,0) = f&\\
    &u(0,t) = u(2,t) = 0&
\end{align}
By the method of separation of variables, we can derive the solution to the heat equation.
If we set $f(x) = a sin(\pi x)$, the solution is $u(x,t) = a sin(\pi x) e^{ - \pi^2 t}$, where $a\in[1,2]$. The corresponding energy is $E(u) = \int_{0}^{2} \frac{1}{2} |u_x|^2 dx \approx \Delta x(\sum_{i=1}^{n} \frac{1}{2} |u_x(x_i)|^2)$. With different parameters $a$, the above equation describes a kind of PDE. The input data samples can be generated as $(a, x, \mathcal{G}(a)(x))$, where $\mathcal{G}(a)(x) = a\sin(\pi x)$ for specific $a$ and $x$. When generating the initial data samples, we choose 50 points from $[0,2)$ uniformly for x and 50 random values of $a$ from $[1,2]$. The time step when updating the parameters in the neural network is $2.5\times 10^4$. The number of iteration steps is 400. We compared the different solutions with 4 different $a$, $1.0$, $1.5$, $1.8$, $2.5$ every 100 steps. Although $a=2.5$ is out of the range of training data, it still performs well in this model. With the exact solution, we also get the error with different $a$ as Table 1. The error is defined by $\frac{1}{N_x}\sum_{k=1}^{N_x}({u(x_k) - \hat{u}(x_k)})^2,$ where $N_x = 51$, $u$ is the solution obtained by EDE-DeepONet and $\hat{u}$ is the exact solution. To illustrate the relationship between the modified energy and the original energy, we compare $r^2$ and $E$ at each step as Figure 2. Both energy are actually disspative in the EDE-DeepONet except when restart strategy applied. The restart strategy is used to keep $r^2$ approaching $E$. The modified energy is initialized when the restart strategy applied. The restart strategy was triggered 
on the 370th step since the modified energy and the original energy are offset. After that, they are on the same trajectory again. It is clear that the modified energy approaches the original energy before and after the restart strategy applied. In Figure 3, we give the comparison between the exact solution and the solution obtained by EDE-DeepONet. From this simple heat equation, we show that EDE-DeepONet correctly predicts the solution of the PDE. The most important fact is that EDE-DeepONet can not only predict the solution in the training subset range but also the solution out of the training range. For instance, we take $a=2.5$ while $a \in [1,2]$ in the training process. EDE-DeepONet shows good accuracy compared to the exact solution as Figure 3 (a)-(d) and Table 1.
\begin{figure}[tbh]
     \centering
    \includegraphics[width=0.8\textwidth]{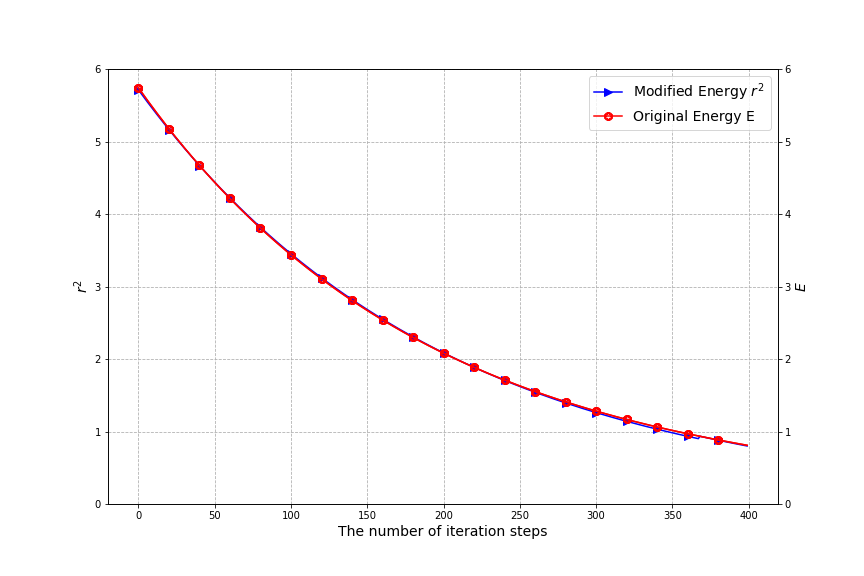}
    \label{fig:m=2, dt = 0.1}
    \caption{The heat equation: The modified energy and original energy when training the network. Each iteration step represents one forward step of the PDE's numerical solution with $\Delta t = 2.5 \times 10^{-4}$. In the EDE-DeepONet, both energy are actually dissipative, with the exception of the restart strategy. In order to maintain that $r^2$ approaches $E$, the restart strategy is employed, and the modified energy is initialized on the 370th step. The modified energy and the original energy on the same trajectory before and after the 370th step.}
\end{figure}
\begin{figure}[tbh]
     \centering
     \begin{subfigure}{0.4\textwidth}
         \centering
         \includegraphics[width=\textwidth]{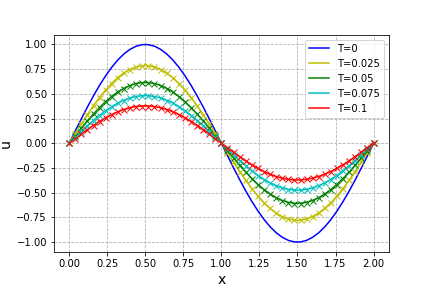}
         \caption{Initial $f(x) = sin(\pi x)$}
         \label{fig:m=2, dt = 0.1}
     \end{subfigure}
     %\hfill
     \begin{subfigure}{0.4\textwidth}
         \centering
         \includegraphics[width=\textwidth]{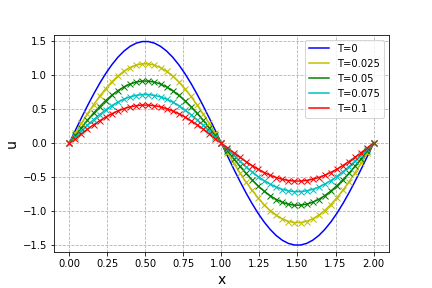}
         \caption{$f(x) = 1.5sin(\pi x)$}
         \label{fig:m=2, dt = 0.05}
     \end{subfigure}
     %\hill
     \begin{subfigure}{0.4\textwidth}
    \centering
    \includegraphics[width=\textwidth]{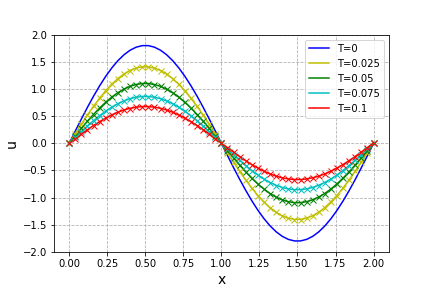}
    \caption{$f(x) = 1.8sin(\pi x)$}
     \label{fig:m=2, dt = 0.1}
    \end{subfigure}
    \begin{subfigure}{0.4\textwidth}
    \centering
    \includegraphics[width=\textwidth]{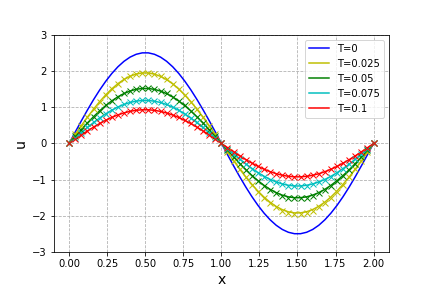}
    \caption{$f(x) = 2.5sin(\pi x)$}
     \label{fig:m=2, dt = 0.1}
    \end{subfigure}
    \caption{The heat equation: The solution with 4 different initial conditions $f(x)=a\sin(\pi x)$. The curve represents the solution obtained by the EDE-DeepONet, and xxx represents the reference solution. The training parameter $a$ is in the range of $[1,2)$, so we give three examples in this range. We also present the case out of the range. It also shows accuracy in Figure 3-(d).}
\end{figure}
\begin{table}[tbh]
\centering
    $\begin{array}{||c|ccccc||}
\hline
\text { Error } & & T = 0.025 & T = 0.05 & T = 0.075 & T = 0.1 \\
\hline \hline a = 1.0  &  & 1.47 \times 10^{-5} & 1.33 \times 10^{-5} & 1.32 \times 10^{-5} & 1.29 \times 10^{-5} \\
\hline a = 1.5 &  & 5.11 \times 10^{-6} & 7.05 \times 10^{-6} & 8.48 \times 10^{-6} & 9.81 \times 10^{-6} \\
\hline a = 1.8 &  & 1.46 \times 10^{-5} & 1.69 \times 10^{-5} & 1.79 \times 10^{-5} & 1.83 \times 10^{-5} \\
\hline a = 2.5 &  & 2.20 \times 10^{-4} & 1.34 \times 10^{-4} & 6.02 \times 10^{-5} & 1.72 \times 10^{-5} \\
\hline
\end{array}$
\caption{The heat equation: The initial condition of the PDE is $f(x) = a\sin{(\pi x)}$. The error is defined by $\frac{1}{N_x}\sum_{k=1}^{N_x}({u(x_k) - \hat{u}(x_k)})^2,$ where $N_x = 51$, $u$ is the solution obtained by EDE-DeepONet and $\hat{u}$ is the exact solution.}
\end{table}

\subsection{Example 2: Parametric heat equations}
In example 1, we take different initial conditions as our inputs. In example 2, we are going to deal with the parametric heat equations. 
A general parametric heat equation in 1D can be described by\\
\begin{align}
    &u_t = c u_{xx}&\\
    &u(x,0) = sin(\pi x)&\\
    &u(0,t) = u(2,t) = 0&
\end{align}
This PDE is more complex than the PDE in Example 1 since the parameter is inside the equation. The traditional numerical scheme needs to be run multiple times to deal with the case with different parameters because they are actually different equations. However, we only need to train the EDE-DeepONet once. The training range of $c$ is chosen as $[1,2)$. We choose 50 points of $x$ and $c$ in the same way as example 1. First, we compared the modified energy with the original energy as Figure 4. The energy is not the same as the first example since the energy depends on the parameter $c$. We compute the average of the energy with different $c$ to represent the energy of the system. This case is more complex than the first one, so it needs more restarts during the training. Even though the modified energy oscillates when restart strategy used, it keeps decreasing after each restart. Second, we give the error between the solution obtained by the EDE-DeepONet and the reference solution in Table 2, where the reference solution can be obtained explicitly by variable separation method and the error is defined in the same way as example 1. Third, we give the comparison between our solution and the reference solution in Figure 5. Same as example 1, we give the predicted solution of $c \notin [1,2]$. All of them show the good accuracy. Hence, EDE-DeepONet can actually solve parametric PDEs.
\begin{figure}[tbh]
    \centering
    \includegraphics[width=0.8\textwidth]{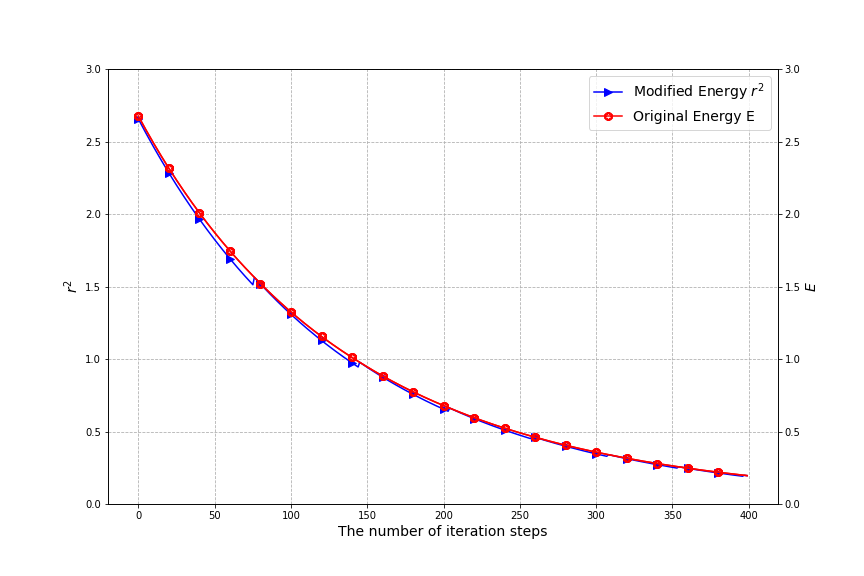}
     \label{fig:m=2, dt = 0.1}
    \caption{The parametric heat equation: The modified energy and original energy when training the network. Each iteration step represents one forward step of the PDE's numerical solution with $\Delta t = 2.5 \times 10^{-4}$. This kind of PDEs is more complicated, so it need more restarts in the training process. The original energy keeps decreasing and the modified energy also shows good approximation of the original energy.}
\end{figure}
\begin{figure}[tbh]
     \centering
     \begin{subfigure}{0.4\textwidth}
         \centering
         \includegraphics[width=\textwidth]{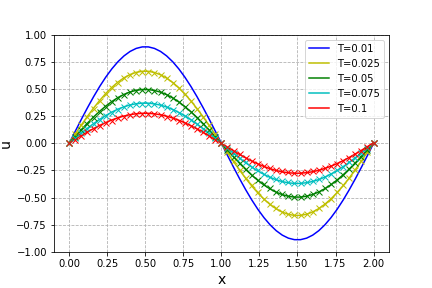}
         \caption{$c=1.2$}
         \label{fig:m=2, dt = 0.1}
     \end{subfigure}
     %\hfill
     \begin{subfigure}{0.4\textwidth}
         \centering
         \includegraphics[width=\textwidth]{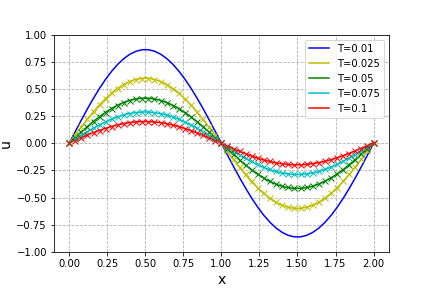}
         \caption{$c=1.5$}
         \label{fig:m=2, dt = 0.05}
     \end{subfigure}
     %\hill
     \begin{subfigure}{0.4\textwidth}
    \centering
    \includegraphics[width=\textwidth]{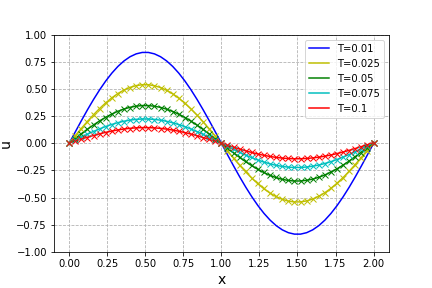}
    \caption{$c=1.8$}
     \label{fig:m=2, dt = 0.1}
    \end{subfigure}
    \begin{subfigure}{0.4\textwidth}
    \centering
    \includegraphics[width=\textwidth]{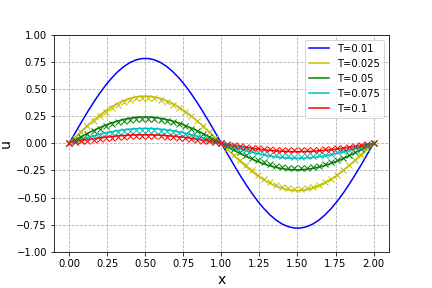}
    \caption{$c=2.5$}
     \label{fig:m=2, dt = 0.1}
    \end{subfigure}
    \caption{The parametric heat equation: The solution with 4 different parameters $c$. The curve represents the solution obtained by the EDE-DeepONet and xxx represents the reference solution. The training parameter $c$ is in the range of $[1,2)$, so we give 3 examples in this range. We also present the case out of the range in Figure 5-(d).}
\end{figure}
\begin{table}[tbh]
\centering
    $\begin{array}{||c|ccccc||}
\hline
\text { Error } & & T = 0.025 & T = 0.05 & T = 0.075 & T = 0.1 \\
\hline \hline c = 1.2  &  & 1.30 \times 10^{-5} & 1.43 \times 10^{-5} & 1.35 \times 10^{-5} & 1.20 \times 10^{-5} \\
\hline c = 1.5 &  & 1.35 \times 10^{-5} & 1.27 \times 10^{-5} & 9.80 \times 10^{-6} & 7.80 \times 10^{-6} \\
\hline c = 1.8 &  & 1.17 \times 10^{-5} & 1.03 \times 10^{-5} & 7.88 \times 10^{-5} & 1.83 \times 10^{-5} \\
\hline c = 2.5 &  & 2.20 \times 10^{-4} & 1.34 \times 10^{-4} & 6.02 \times 10^{-5} & 7.08 \times 10^{-6} \\
\hline
\end{array}$
\caption{The parametric heat equation: The initial condition of the PDE is $f(x) = \sin{(\pi x)}$. The error is defined by $\frac{1}{N_x}\sum_{k=1}^{N_x}({u(x_k) - \hat{u}(x_k)})^2$, where $N_x = 51$, $u$ is the solution obtained by EDE-DeepONet and $\hat{u}$ is the exact solution.}
\end{table}
\subsection{Example 3: Allen-Cahn equations}
The energy in Examples 1 and 2 is quadratic and the right-hand side of the PDE is linear with respect to $u$. We are going to show the result for the PDE with more complicated energy. The Allen-Cahn equation is a kind of reaction-diffusion equation. It is derived to describe the process of the phase separation. It was developed to solve a problem in the material science area and has been used to represent the moving interfaces in a phase-field model in fluid dynamics. The Allen-Cahn equation can be treated as a gradient flow in $L^2$ with some specific energy. We discussed the 1D case and 2D case as follows:
\subsubsection{1D case}
\noindent (a) {Various initial conditions:}\\
We start with the simple case, 1D Allen-Cahn equation.
It can be described by the following equations: 
\begin{align}
    &u_t =  u_{xx} - g(x)&\\
    &u(x,0) = a\sin{\pi x}&\\
    &u(-1,t) = u(1,t) = 0&
\end{align}
The corresponding  Ginzburg–Landau free energy $E[u] = \int_{0}^{1} \frac{1}{2} |u_x|^2 dx + \int_{x=0}^{x=1}  G(u) dx$, where $ G(u) = \frac{1}{4\epsilon^2}(u^2 - 1)^2$ and $g(u) = G'(u) = \frac{1}{\epsilon^2}u(u^2 - 1)$, $\epsilon = 0.1$. The parameter $\epsilon$ affects the width of the jump when arriving at the steady state as the Figure 7 (c), (j), (o) and (t). In the EDE-DeepONet, we set $\Delta t=10^{-4}$, the number of spatial points $N_x$ is 51 and the range of $a$ is $[0.1, 0.5]$. We also compared the modified energy and the original energy as Figure 6. The modified energy can approximate well to the original energy even in a much more complicated form. Then, we compared 4 different solutions with different $a\in [0.1, 0.5]$ obtained by the EDE-DeepONet and the reference solution obtained by the SAV method in traditional numerical computation as Figure 7. The error is shown in Table 3, where error is defined in the same way as example 1. $a=0.6 \notin [0.1, 0.5)$ shows that EDE-DeepONet can predict the solution well out of the training range. We compared the solution with 4 different initial condition parameter $a$ every 100 steps until the final time $T=0.04$ as Figure 7. Each row presents the solution under the same initial condition but with different evolution time $T$. With this example, it shows that EDE-DeepONet can deal with the PDE with a jump, while it is hard for other neural networks.
\begin{table}[tbh]
\centering
    $\begin{array}{||c|ccccc||}
\hline
\text { Error } & & T = 0.01 & T = 0.02 & T = 0.03 & T = 0.04 \\
\hline \hline a = 0.1  &  & 4.95 \times 10^{-5} & 2.27 \times 10^{-4} & 4.97 \times 10^{-4} & 7.40 \times 10^{-4} \\
\hline a = 0.3 &  & 2.32 \times 10^{-4} & 4.62 \times 10^{-4} & 6.66 \times 10^{-4} & 7.58 \times 10^{-4} \\
\hline a = 0.2  &  & 1.25 \times 10^{-4} & 3.46 \times 10^{-4} & 4.86 \times 10^{-3} & 7.15 \times 10^{-4} \\
\hline a = 0.4 &  & 2.80 \times 10^{-4} & 4.83 \times 10^{-4} & 6.35 \times 10^{-4} & 7.09 \times 10^{-4} \\
\hline a = 0.6 &  & 6.45 \times 10^{-4} & 5.02 \times 10^{-4} & 4.90 \times 10^{-4} & 6.15 \times 10^{-4} \\
\hline
\end{array}$
\caption{1D Allen-Cahn equation: The initial condition of the Allen-Cahn equation is $f(x) = a\sin{(\pi x)}$. The error is defined by $\frac{1}{N_x}\sum_{k=1}^{N_x}({u(x_k) - \hat{u}(x_k)})^2$, where $N_x = 51$, $u$ is the solution obtained by EDE-DeepONet and $\hat{u}$ is the reference solution.}
\end{table}
\begin{figure}[tbh]
    \centering
    \includegraphics[width=0.8\textwidth]{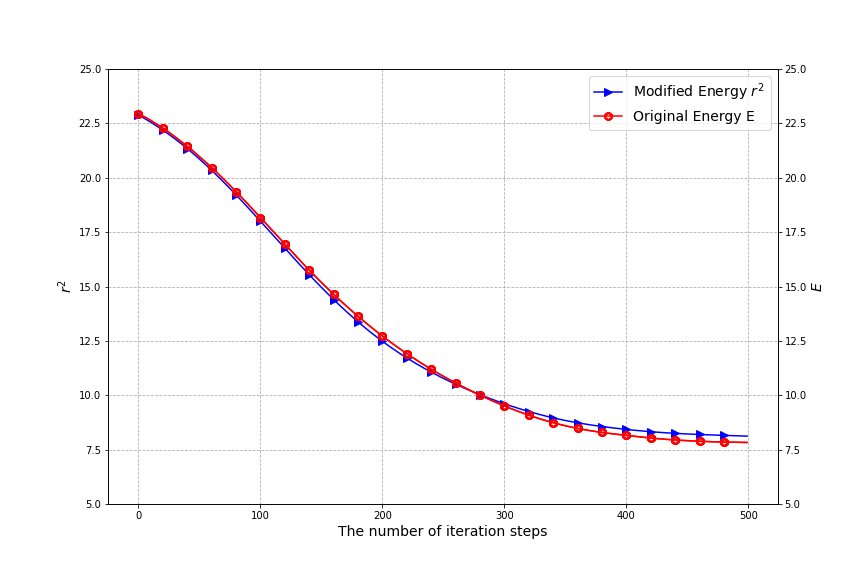}
     \label{fig:m=2, dt = 0.1}
    \caption{1D Allen-Cahn equation: The modified energy and original energy when training the network are shown above. Each iteration step represents one forward step of the PDE's numerical solution with $\Delta t = 10^{-4}$. The modified energy shows the same trends as the original energy.}
\end{figure}
\begin{figure}[tbh]
     \centering
     \begin{subfigure}{0.17\textwidth}
         \centering
         \includegraphics[width=\textwidth]{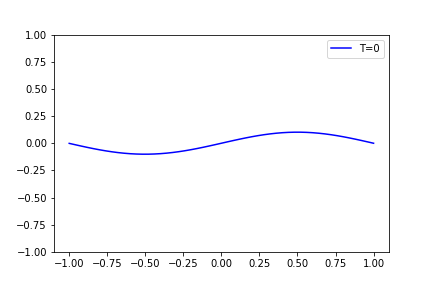}
         \caption{$a=0.1$, $T=0$}
         \label{fig:m=2, dt = 0.1}
     \end{subfigure}
     %\hfill
     \begin{subfigure}{0.17\textwidth}
         \centering
         \includegraphics[width=\textwidth]{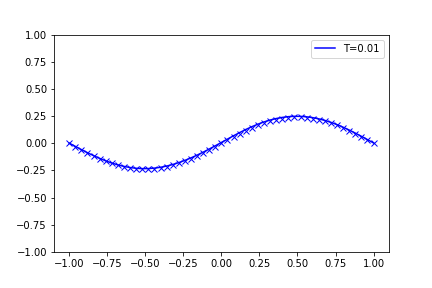}
         \caption{$a=0.1$, $T=0.01$}
         \label{fig:m=2, dt = 0.05}
     \end{subfigure}
     %\hill
     \begin{subfigure}{0.17\textwidth}
    \centering
    \includegraphics[width=\textwidth]{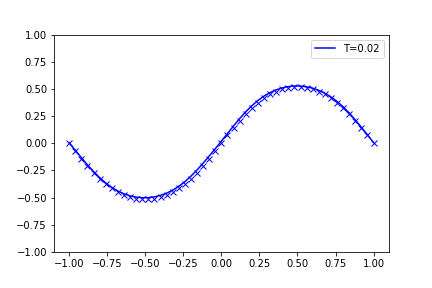}
    \caption{$a=0.1$, $T=0.02$}
     \label{fig:m=2, dt = 0.1}
    \end{subfigure}
    \begin{subfigure}{0.17\textwidth}
    \centering
    \includegraphics[width=\textwidth]{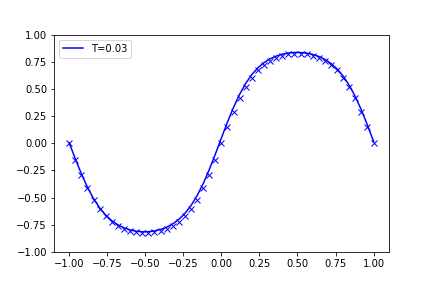}
    \caption{$a=0.1$, $T=0.03$}
     \label{fig:m=2, dt = 0.1}
    \end{subfigure}
    \begin{subfigure}{0.17\textwidth}
    \centering
    \includegraphics[width=\textwidth]{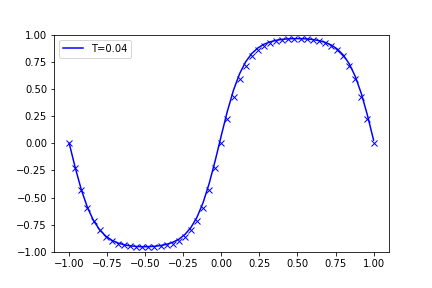}
    \caption{$a=0.1$, $T=0.04$}
     \label{fig:m=2, dt = 0.1}
    \end{subfigure}
    \begin{subfigure}{0.17\textwidth}
         \centering
         \includegraphics[width=\textwidth]{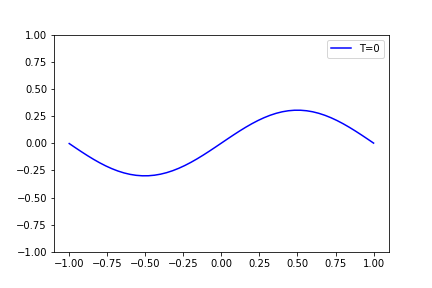}
         \caption{$a=0.3$, $T=0$}
         \label{fig:m=2, dt = 0.1}
     \end{subfigure}
     %\hfill
     \begin{subfigure}{0.17\textwidth}
         \centering
         \includegraphics[width=\textwidth]{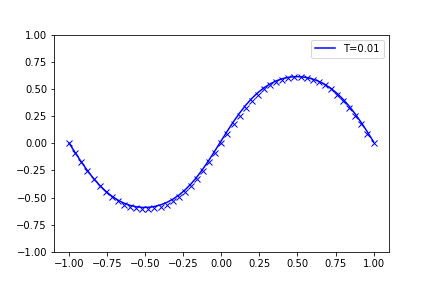}
         \caption{$a=0.3$, $T=0.01$}
         \label{fig:m=2, dt = 0.05}
     \end{subfigure}
     %\hill
     \begin{subfigure}{0.17\textwidth}
    \centering
    \includegraphics[width=\textwidth]{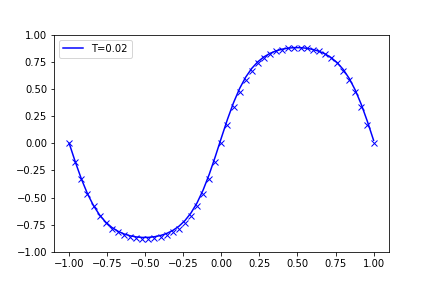}
    \caption{$a=0.3$, $T=0.02$}
     \label{fig:m=2, dt = 0.1}
    \end{subfigure}
    \begin{subfigure}{0.16\textwidth}
    \centering
    \includegraphics[width=\textwidth]{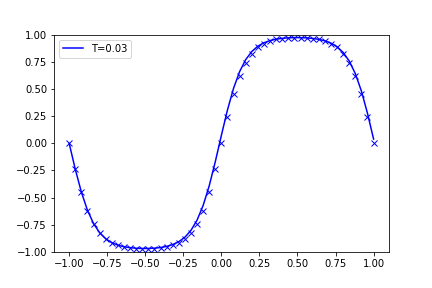}
    \caption{$a=0.3$, $T=0.03$}
     \label{fig:m=2, dt = 0.1}
    \end{subfigure}
    \begin{subfigure}{0.17\textwidth}
    \centering
    \includegraphics[width=\textwidth]{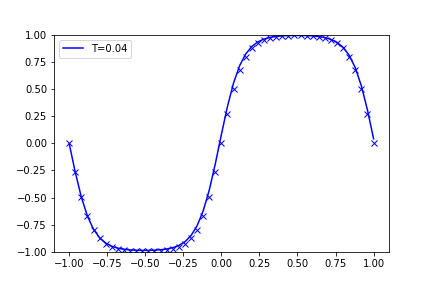}
    \caption{$a=0.3$, $T=0.04$}
     \label{fig:m=2, dt = 0.1}
    \end{subfigure}
    \begin{subfigure}{0.17\textwidth}
         \centering
         \includegraphics[width=\textwidth]{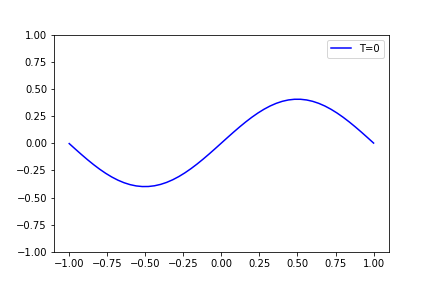}
         \caption{$a=0.4$, $T=0$}
         \label{fig:m=2, dt = 0.1}
     \end{subfigure}
     %\hfill
     \begin{subfigure}{0.17\textwidth}
         \centering
         \includegraphics[width=\textwidth]{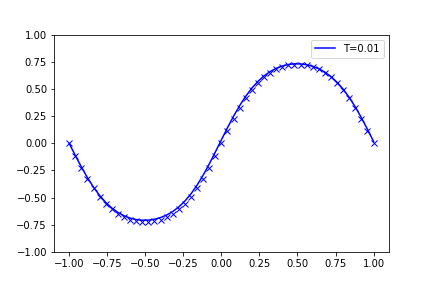}
         \caption{$a=0.4$, $T=0.01$}
         \label{fig:m=2, dt = 0.05}
     \end{subfigure}
     %\hill
     \begin{subfigure}{0.17\textwidth}
    \centering
    \includegraphics[width=\textwidth]{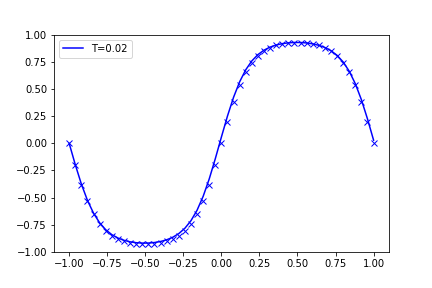}
    \caption{$a=0.4$, $T=0.02$}
     \label{fig:m=2, dt = 0.1}
    \end{subfigure}
    \begin{subfigure}{0.17\textwidth}
    \centering
    \includegraphics[width=\textwidth]{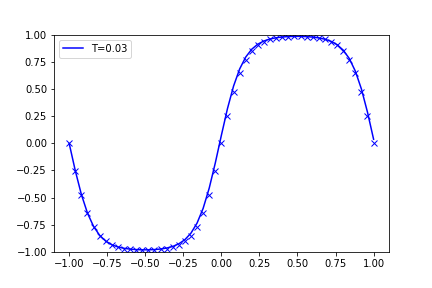}
    \caption{$a=0.4$, $T=0.03$}
     \label{fig:m=2, dt = 0.1}
    \end{subfigure}
    \begin{subfigure}{0.17\textwidth}
    \centering
    \includegraphics[width=\textwidth]{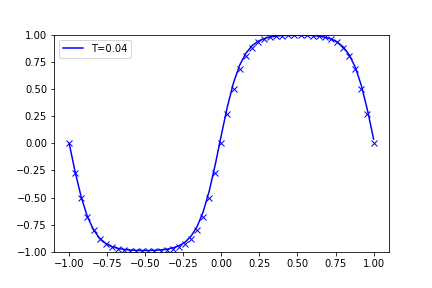}
    \caption{$a=0.4$, $T=0.04$}
     \label{fig:m=2, dt = 0.1}
    \end{subfigure}
    \begin{subfigure}{0.17\textwidth}
         \centering
         \includegraphics[width=\textwidth]{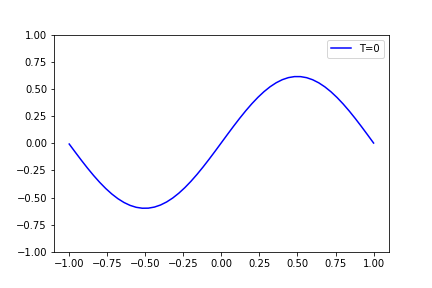}
         \caption{$a=0.6$, $T=0$}
         \label{fig:m=2, dt = 0.1}
     \end{subfigure}
     %\hfill
     \begin{subfigure}{0.17\textwidth}
         \centering
         \includegraphics[width=\textwidth]{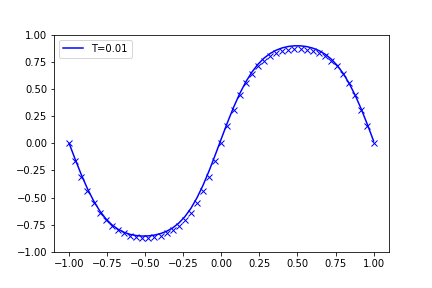}
         \caption{$a=0.6$, $T=0.01$}
         \label{fig:m=2, dt = 0.05}
     \end{subfigure}
     %\hill
     \begin{subfigure}{0.17\textwidth}
    \centering
    \includegraphics[width=\textwidth]{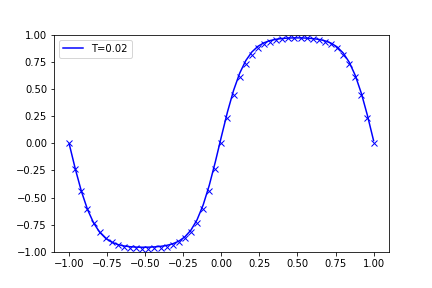}
    \caption{$a=0.6$, $T=0.02$}
     \label{fig:m=2, dt = 0.1}
    \end{subfigure}
    \begin{subfigure}{0.17\textwidth}
    \centering
    \includegraphics[width=\textwidth]{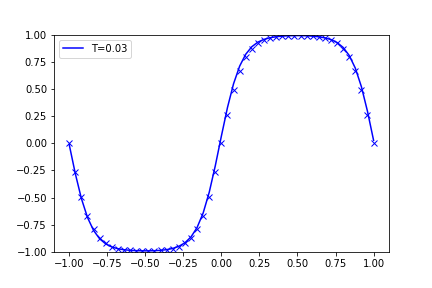}
    \caption{$a=0.6$, $T=0.03$}
     \label{fig:m=2, dt = 0.1}
    \end{subfigure}
    \begin{subfigure}{0.17\textwidth}
    \centering
    \includegraphics[width=\textwidth]{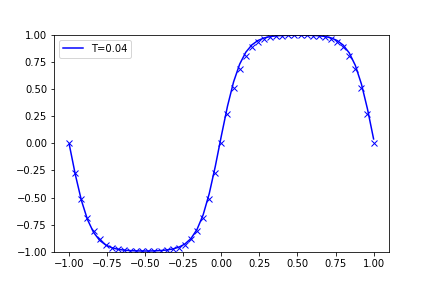}
    \caption{$a=0.6$, $T=0.04$}
     \label{fig:m=2, dt = 0.1}
    \end{subfigure}
    \caption{1d Allen-Cahn equation: The solution for 1d Allen-Cahn equation with 4 different initial conditions $f(x) = a\sin{\pi x}$. The curve represents the solution obtained by our model, and xxx represents the reference solution. We draw the figure for every 100 steps. The range of $a$ is $[0.1, 0.5]$. We also compare the solution with $a \notin [0.1, 0.5]$. All the figures show the trends of the phase separation.}
\end{figure}\\
(b) Various thickness of the interface:\\
Heuristically, $\epsilon$ represents the thickness of the interface in the phase separation process. We are able to obtain a sharp interface when $\epsilon \rightarrow 0$ with evolving in time. Each theoretical and numerical analysis of the limit makes a difference in the purpose of the understanding of the equation, cf. e.g. \cite{caginalp1998convergence, chen2006rapidly}. We  take $\epsilon$ as a training parameter. The problem can be described as:
\begin{align}
    &u_t =  u_{xx} - \frac{1}{\epsilon^2}(u^3 - u)&\\
    &u(-1,t) = u(1,t) = 0&
\end{align}
Since the training sample contains the parameter $\epsilon$, we can not use the same initial condition as the last example. We use spectral methods for a few steps with initial condition $u(x,0)=0.4\sin{(\pi x)}$. The training sample is generated based on the numerical solution of $u_{\epsilon}(x,0.02)$. We randomly select 50 different $\epsilon$ from $[0.1,0.2]$. We set the learning rate as $\Delta t = 10^{-4}$ and apply the adaptive time stepping strategy. We obtain the predicted solution after 400 iterations with different $\epsilon$. The rest setting is the same as the last example. The solution with different $\epsilon$ is shown in Figure 8. As $\epsilon$ goes smaller, the interface is sharper. Besides, the range of the training parameter is $(0.1,0.2)$. We are also able to obtain the solution out of the above range. EDE-DeepONet can track the limit of $\epsilon$ in only one training process, EDE-DeepONet can track the limit of $\epsilon$ in only one training process, whereas other traditional numerical methods hardly make it.
\begin{figure}[tbh]
     \centering
     \includegraphics[width=0.8\textwidth]{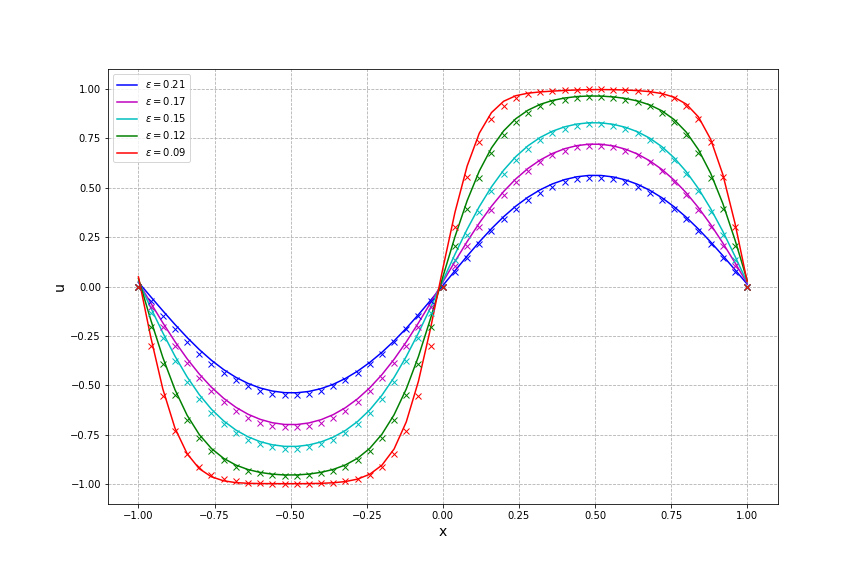}
     \label{fig:m=2, dt = 0.1}
     %\hfill
    \caption{1d Allen-Cahn equation: Solutions with different thickness of the interface at the same final time. The curve represents the solution obtained by EDE-DeepONet. xxx represents the reference solution.}
\end{figure}
\subsubsection{2D case}
The 2D case Allen-Cahn equation is even more complex. The problem can be described as follows:
\begin{align}
    &u_t =  \Delta u - g(u)&\\
    &u(x,y,0) = a sin(\pi x)sin(\pi y)&\\
    &u(-1,y,t) = u(1,y,t) = u(x,-1,t) = u(x,1,t) = 0&
\end{align}
The corresponding  Ginzburg–Landau free energy $E[u] =  \int_{-1}^{1}\int_{-1}^{1} \frac{1}{2}( |u_x|^2 + |u_y|^2 ) dx dy + \int_{-1}^{1}\int_{-1}^{1}  G(u) dx$, where $ G(u) = \frac{1}{4\epsilon^2}(u^2 - 1)^2$ and $g(u) = G'(u) = \frac{1}{\epsilon^2}u(u^2 - 1)$. Usually, we take $\epsilon = 0.1$. In the training process, we take $\Delta t = 2\times 10^{-4}$. The number of spatial points is $51\times 51$ and the number of training parameters $a$ is 20. The way to choose $a \in (0.1, 0.4)$ and $x$ is the same as in example 1. We first compared the exact solution and the solution obtained by EDE-DeepONet with initial condition $f(x,y) = 0.2 sin(\pi x)sin(\pi y)$, where the exact solution is obtained by the traditional SAV method. EDE-DeepONet predicts the solution correctly based on Table 4 and Figure 9. Then in order to show its accuracy, we draw Figure 10 with more parameters. All the examples show good trends to separate. The case $a = 0.4$ is out of the training range, but it still approaches the exact solution.
\begin{table}[tbh]
\centering
    $\begin{array}{||c|cccc||}
\hline
\text { Error } & & T = 0.01 & T = 0.02 & T = 0.03 \\
\hline \hline a = 0.15 &  & 1.23 \times 10^{-4} & 6.53 \times 10^{-4} & 2.75 \times 10^{-3} \\
\hline a = 0.2  &  & 2.24 \times 10^{-4} & 1.10 \times 10^{-3} & 4.04 \times 10^{-3} \\
\hline a = 0.3 &  & 4.28 \times 10^{-4} & 1.84 \times 10^{-3} & 5.76 \times 10^{-3} \\
\hline a = 0.35  &  & 5.31 \times 10^{-4} & 2.17 \times 10^{-3} & 6.25 \times 10^{-3} \\
\hline a = 0.4  &  & 6.94 \times 10^{-4} & 2.71 \times 10^{-3} & 7.22 \times 10^{-3} \\
\hline
\end{array}$
\caption{2D Allen-Cahn equation: The initial condition of the 2D Allen-Cahn equation is $f(x, y) = a\sin{(\pi x)}\sin{(\pi y)}$. The error is defined by $\frac{1}{N_xN_y}\sum_{k=1}^{N_x}\sum_{j=1}^{N_y}({u(x_k,y_j) - \hat{u}(x_k,y_j)})^2,$ where $N_x = N_y = 51$, $u$ is the solution obtained by EDE-DeepONet and $\hat{u}$ is the reference solution.}
\end{table}
\begin{figure}
    \centering
    \begin{subfigure}{0.24\textwidth}
    \centering
    \includegraphics[width=\textwidth]{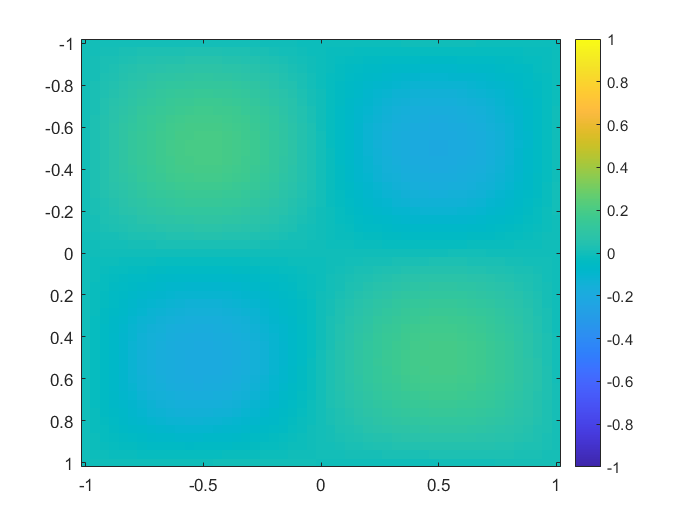}
    \caption{$a=0.2$, $T=0$}
     \label{fig:m=2, dt = 0.1}
    \end{subfigure}
     \begin{subfigure}{0.24\textwidth}
         \centering
         \includegraphics[width=\textwidth]{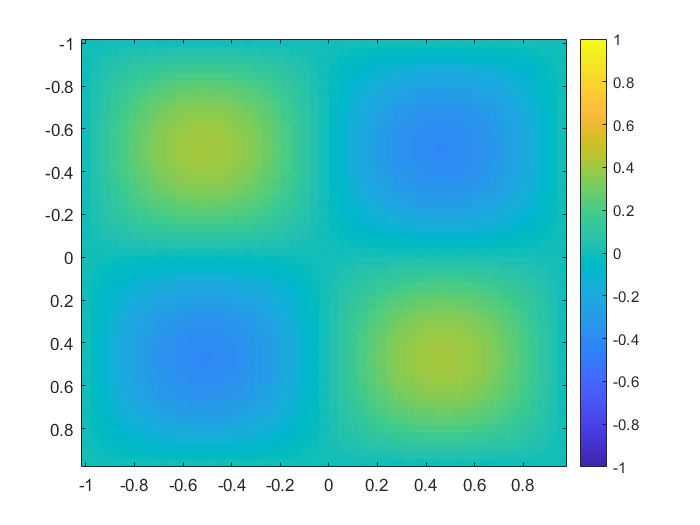}
         \caption{$a=0.2$, $T=0.01$}
         \label{fig:m=2, dt = 0.1}
     \end{subfigure}
     %\hfill
     \begin{subfigure}{0.24\textwidth}
         \centering
         \includegraphics[width=\textwidth]{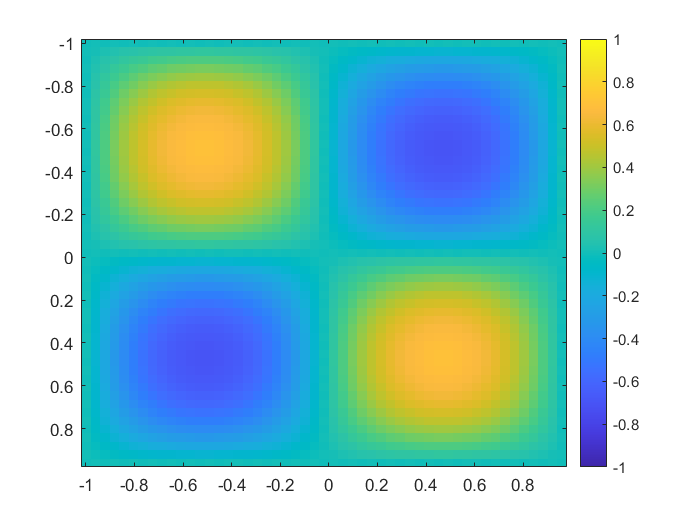}
         \caption{$a=0.2$, $T=0.02$}
         \label{fig:m=2, dt = 0.05}
     \end{subfigure}
     %\hill
     \begin{subfigure}{0.24\textwidth}
    \centering
    \includegraphics[width=\textwidth]{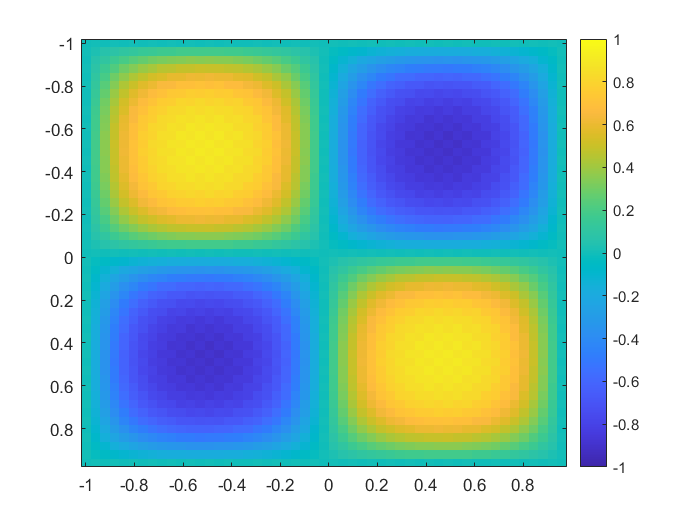}
    \caption{$a=0.2$, $T=0.03$}
     \label{fig:m=2, dt = 0.1}
    \end{subfigure}
        \begin{subfigure}{0.24\textwidth}
         \centering
         \includegraphics[width=\textwidth]{2allen21.png}
         \caption{$a=0.2$, $T=0$}
         \label{fig:m=2, dt = 0.1}
     \end{subfigure}
     %\hfill
     \begin{subfigure}{0.24\textwidth}
         \centering
         \includegraphics[width=\textwidth]{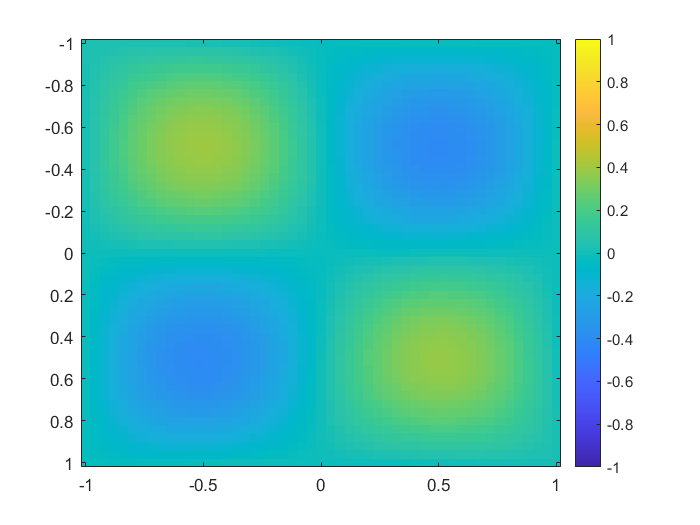}
         \caption{$a=0.2$, $T=0.01$}
         \label{fig:m=2, dt = 0.05}
     \end{subfigure}
     %\hill
     \begin{subfigure}{0.24\textwidth}
    \centering
    \includegraphics[width=\textwidth]{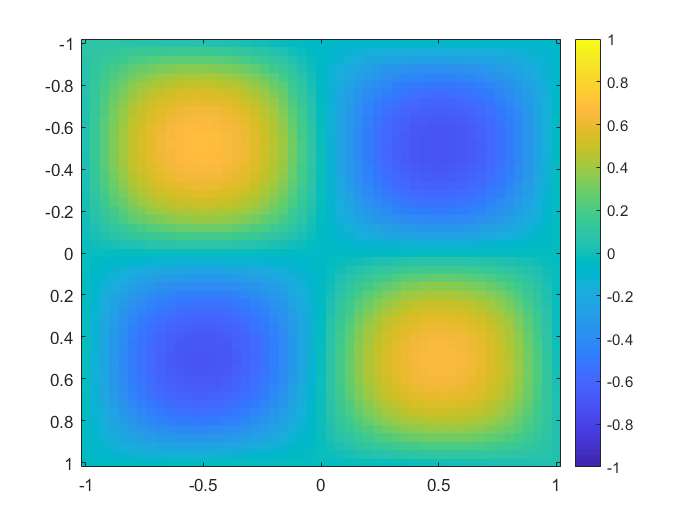}
    \caption{$a=0.2$, $T=0.02$}
     \label{fig:m=2, dt = 0.1}
    \end{subfigure}
    \begin{subfigure}{0.24\textwidth}
    \centering
    \includegraphics[width=\textwidth]{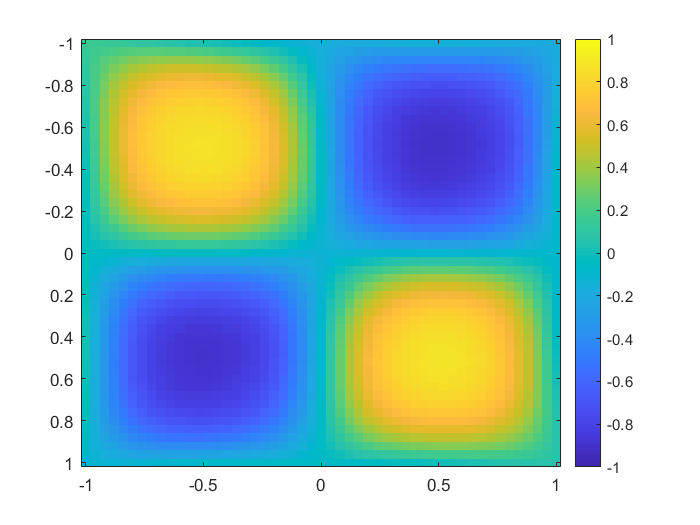}
    \caption{$a=0.2$, $T=0.03$}
     \label{fig:m=2, dt = 0.1}
    \end{subfigure}
    \caption{2D Allen-Cahn equation: (a)-(d) represents the reference solution of the 2D Allen-Cahn equation with initial condition $f(x,y) = 0.3\sin{(\pi x)}\sin{(\pi y)}$. (e)-(h) is the solution obtained by the EDE-DeepONet.}
\end{figure}
\begin{figure}[tbh]
     \centering
     \begin{subfigure}{0.24\textwidth}
         \centering
         \includegraphics[width=\textwidth]{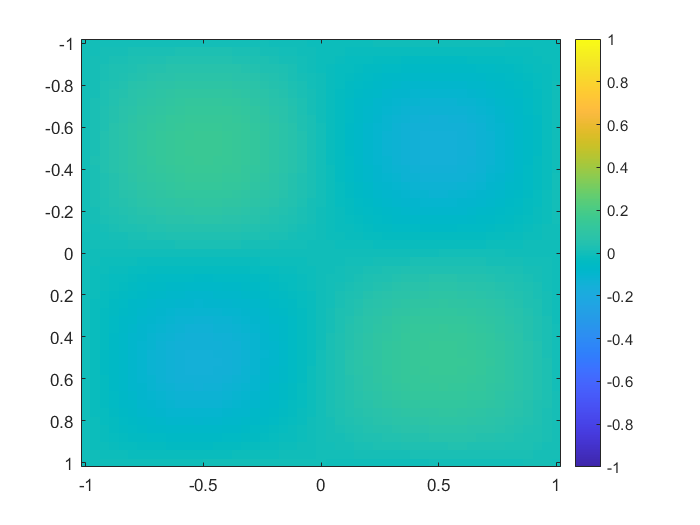}
         \caption{$a=0.15$, $T=0$}
         \label{fig:m=2, dt = 0.1}
     \end{subfigure}
     %\hfill
     \begin{subfigure}{0.24\textwidth}
         \centering
         \includegraphics[width=\textwidth]{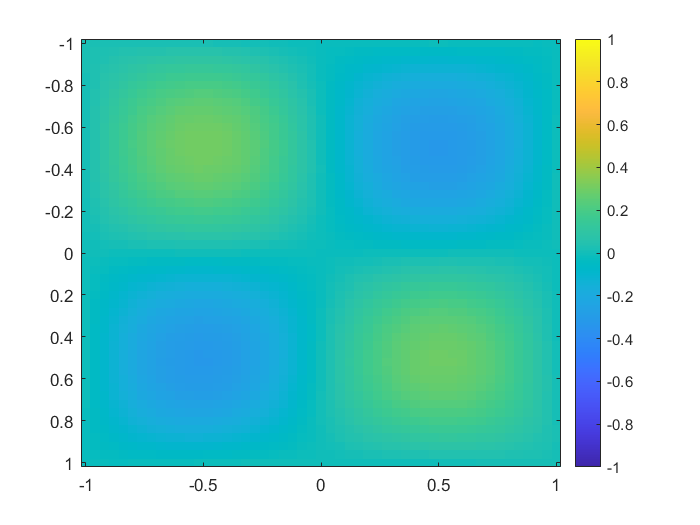}
         \caption{$a=0.15$, $T=0.01$}
         \label{fig:m=2, dt = 0.05}
     \end{subfigure}
     %\hill
     \begin{subfigure}{0.24\textwidth}
    \centering
    \includegraphics[width=\textwidth]{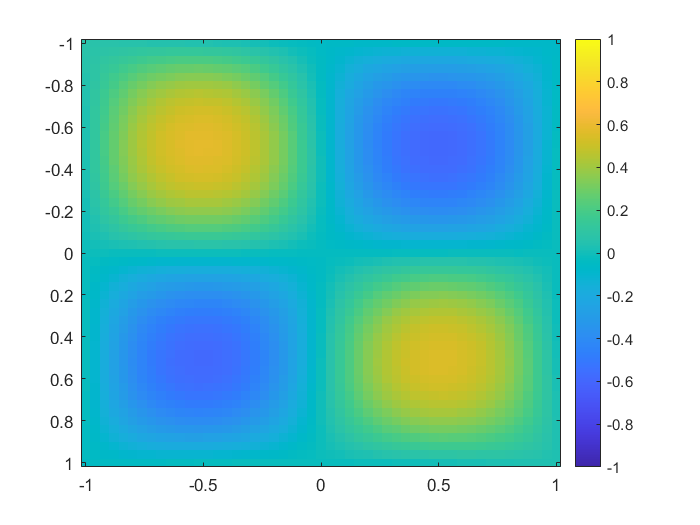}
    \caption{$a=0.15$, $T=0.02$}
     \label{fig:m=2, dt = 0.1}
    \end{subfigure}
    \begin{subfigure}{0.24\textwidth}
    \centering
    \includegraphics[width=\textwidth]{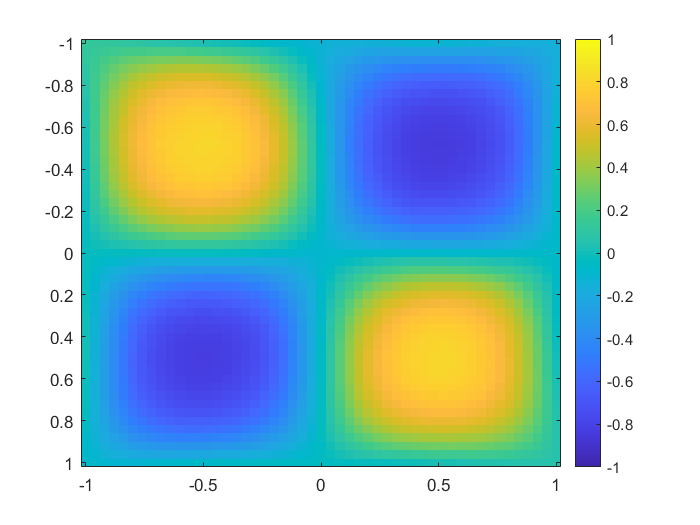}
    \caption{$a=0.15$, $T=0.03$}
     \label{fig:m=2, dt = 0.1}
    \end{subfigure}
    \begin{subfigure}{0.24\textwidth}
         \centering
         \includegraphics[width=\textwidth]{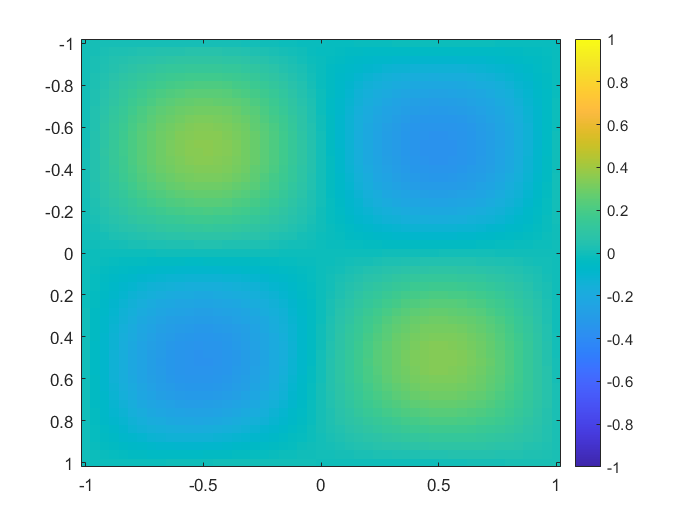}
         \caption{$a=0.35$, $T=0$}
         \label{fig:m=2, dt = 0.1}
     \end{subfigure}
     %\hfill
     \begin{subfigure}{0.24\textwidth}
         \centering
         \includegraphics[width=\textwidth]{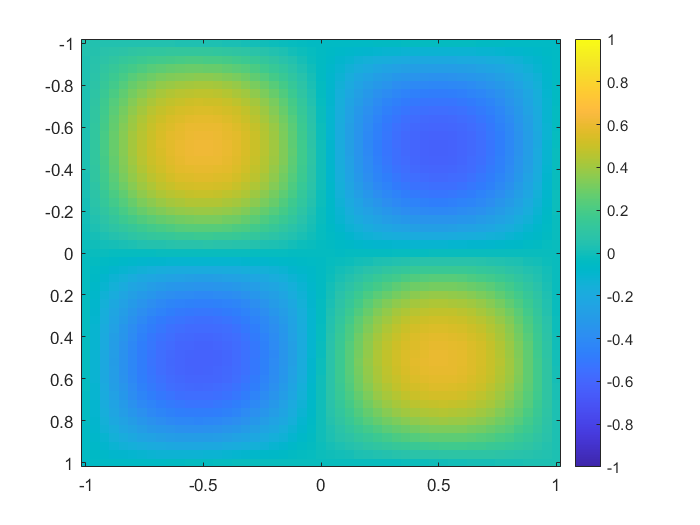}
         \caption{$a=0.35$, $T=0.01$}
         \label{fig:m=2, dt = 0.05}
     \end{subfigure}
     %\hill
     \begin{subfigure}{0.24\textwidth}
    \centering
    \includegraphics[width=\textwidth]{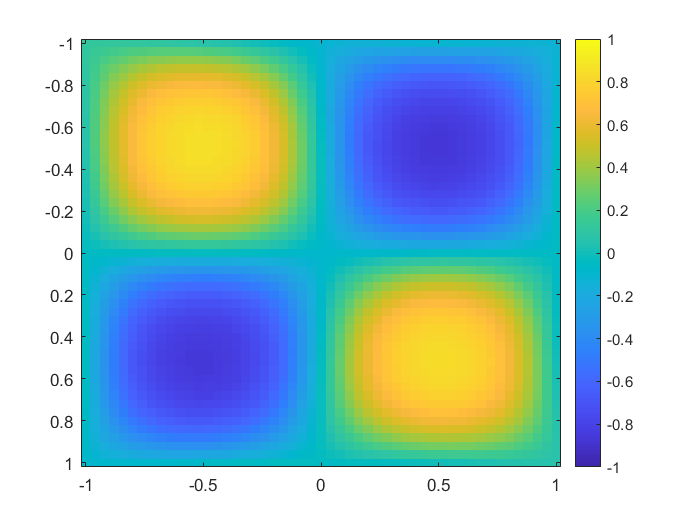}
    \caption{$a=0.35$, $T=0.02$}
     \label{fig:m=2, dt = 0.1}
    \end{subfigure}
    \begin{subfigure}{0.24\textwidth}
    \centering
    \includegraphics[width=\textwidth]{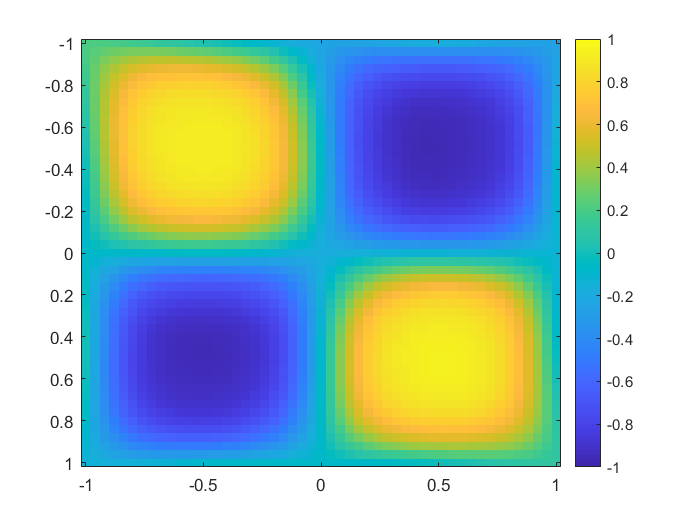}
    \caption{$a=0.35$, $T=0.03$}
     \label{fig:m=2, dt = 0.1}
    \end{subfigure}
    \caption{2D Allen-Cahn equation: The solution of 2D Allen-Cahn equation with 4 different initial conditions $f(x,y) = a \sin{\pi x}\sin{\pi y}$. The training parameter $a \in [0.1, 0.4]$. We draw three figures where $a$ is in the training range and one figure where $a$ is out of the training range. All the figures show the phase separation trends according to the reference solution. As $a$ is further away from the training range, the error tends to be larger.}
\end{figure}
\section{Concluding Remarks}
In this paper, we provide a new neural network architecture to solve parametric PDEs with different initial conditions, while maintaining the energy dissipative of dynamic systems. We first introduce the energy dissipative law of dynamic systems to the DeepONet. We also introduce an adaptive time stepping strategy and restart strategy. With our experiments, both above strategies help keep the modified energy approaching the original energy. To avoid much cost of training the DeepONet, we evolve the neural network based on Euler methods. In this article, we adopt the SAV method to solve gradient flow problems. With this successful attempt, more work could be done. For example, we can consider a general Wasserstein gradient flow problem. We are only adopting the basic architecture of the DeepONet. The more advanced architecture is compatible to our work. It may further improve the accuracy of EDE-DeepONet. 

\section*{Acknowledgments}
SJ and SZ gratefully acknowledge the support of NSF DMS-1720442 and AFOSR FA9550-20-1-0309. GL and ZZ gratefully acknowledge the support of the National Science Foundation (DMS-1555072, DMS-2053746, and DMS-2134209), Brookhaven National Laboratory Subcontract 382247, and U.S. Department of Energy (DOE) Office of Science Advanced Scientific Computing Research program DE-SC0021142 and DE-SC0023161.
%%Vancouver style references.
\clearpage
\bibliographystyle{model1-num-names}
\bibliography{refs}

\begin{thebibliography}{35}
\expandafter\ifx\csname natexlab\endcsname\relax\def\natexlab#1{#1}\fi
\providecommand{\url}[1]{\texttt{#1}}
\providecommand{\href}[2]{#2}
\providecommand{\path}[1]{#1}
\providecommand{\DOIprefix}{doi:}
\providecommand{\ArXivprefix}{arXiv:}
\providecommand{\URLprefix}{URL: }
\providecommand{\Pubmedprefix}{pmid:}
\providecommand{\doi}[1]{\href{http://dx.doi.org/#1}{\path{#1}}}
\providecommand{\Pubmed}[1]{\href{pmid:#1}{\path{#1}}}
\providecommand{\bibinfo}[2]{#2}
\ifx\xfnm\relax \def\xfnm[#1]{\unskip,\space#1}\fi
%Type = Article
\bibitem[{Kovachki et~al.(2021)Kovachki, Li, Liu, Azizzadenesheli,
  Bhattacharya, Stuart, and Anandkumar}]{kovachki2021neural}
\bibinfo{author}{N.~Kovachki}, \bibinfo{author}{Z.~Li},
  \bibinfo{author}{B.~Liu}, \bibinfo{author}{K.~Azizzadenesheli},
  \bibinfo{author}{K.~Bhattacharya}, \bibinfo{author}{A.~Stuart},
  \bibinfo{author}{A.~Anandkumar},
\newblock \bibinfo{title}{Neural operator: Learning maps between function
  spaces},
\newblock \bibinfo{journal}{arXiv preprint arXiv:2108.08481}
  (\bibinfo{year}{2021}).
%Type = Article
\bibitem[{Li et~al.(2020)Li, Kovachki, Azizzadenesheli, Liu, Bhattacharya,
  Stuart, and Anandkumar}]{li2020neural}
\bibinfo{author}{Z.~Li}, \bibinfo{author}{N.~Kovachki},
  \bibinfo{author}{K.~Azizzadenesheli}, \bibinfo{author}{B.~Liu},
  \bibinfo{author}{K.~Bhattacharya}, \bibinfo{author}{A.~Stuart},
  \bibinfo{author}{A.~Anandkumar},
\newblock \bibinfo{title}{Neural operator: Graph kernel network for partial
  differential equations},
\newblock \bibinfo{journal}{arXiv preprint arXiv:2003.03485}
  (\bibinfo{year}{2020}).
%Type = Article
\bibitem[{Khoo et~al.(2021)Khoo, Lu, and Ying}]{khoo2021solving}
\bibinfo{author}{Y.~Khoo}, \bibinfo{author}{J.~Lu}, \bibinfo{author}{L.~Ying},
\newblock \bibinfo{title}{Solving parametric pde problems with artificial
  neural networks},
\newblock \bibinfo{journal}{European Journal of Applied Mathematics}
  \bibinfo{volume}{32} (\bibinfo{year}{2021}) \bibinfo{pages}{421--435}.
%Type = Article
\bibitem[{Bhattacharya et~al.(2020)Bhattacharya, Hosseini, Kovachki, and
  Stuart}]{bhattacharya2020model}
\bibinfo{author}{K.~Bhattacharya}, \bibinfo{author}{B.~Hosseini},
  \bibinfo{author}{N.~B. Kovachki}, \bibinfo{author}{A.~M. Stuart},
\newblock \bibinfo{title}{Model reduction and neural networks for parametric
  pdes},
\newblock \bibinfo{journal}{arXiv preprint arXiv:2005.03180}
  (\bibinfo{year}{2020}).
%Type = Article
\bibitem[{Nelsen and Stuart(2021)}]{nelsen2021random}
\bibinfo{author}{N.~H. Nelsen}, \bibinfo{author}{A.~M. Stuart},
\newblock \bibinfo{title}{The random feature model for input-output maps
  between banach spaces},
\newblock \bibinfo{journal}{SIAM Journal on Scientific Computing}
  \bibinfo{volume}{43} (\bibinfo{year}{2021}) \bibinfo{pages}{A3212--A3243}.
%Type = Article
\bibitem[{Li et~al.(2020)Li, Kovachki, Azizzadenesheli, Liu, Bhattacharya,
  Stuart, and Anandkumar}]{li2020fourier}
\bibinfo{author}{Z.~Li}, \bibinfo{author}{N.~Kovachki},
  \bibinfo{author}{K.~Azizzadenesheli}, \bibinfo{author}{B.~Liu},
  \bibinfo{author}{K.~Bhattacharya}, \bibinfo{author}{A.~Stuart},
  \bibinfo{author}{A.~Anandkumar},
\newblock \bibinfo{title}{Fourier neural operator for parametric partial
  differential equations},
\newblock \bibinfo{journal}{arXiv preprint arXiv:2010.08895}
  (\bibinfo{year}{2020}).
%Type = Article
\bibitem[{Patel et~al.(2021)Patel, Trask, Wood, and Cyr}]{patel2021physics}
\bibinfo{author}{R.~G. Patel}, \bibinfo{author}{N.~A. Trask},
  \bibinfo{author}{M.~A. Wood}, \bibinfo{author}{E.~C. Cyr},
\newblock \bibinfo{title}{A physics-informed operator regression framework for
  extracting data-driven continuum models},
\newblock \bibinfo{journal}{Computer Methods in Applied Mechanics and
  Engineering} \bibinfo{volume}{373} (\bibinfo{year}{2021})
  \bibinfo{pages}{113500}.
%Type = Article
\bibitem[{Opschoor et~al.(2020)Opschoor, Schwab, and Zech}]{opschoor2020deep}
\bibinfo{author}{J.~A. Opschoor}, \bibinfo{author}{C.~Schwab},
  \bibinfo{author}{J.~Zech},
\newblock \bibinfo{title}{Deep learning in high dimension: Relu network
  expression rates for bayesian pde inversion},
\newblock \bibinfo{journal}{SAM Research Report} \bibinfo{volume}{2020}
  (\bibinfo{year}{2020}).
%Type = Article
\bibitem[{Schwab and Zech(2019)}]{schwab2019deep}
\bibinfo{author}{C.~Schwab}, \bibinfo{author}{J.~Zech},
\newblock \bibinfo{title}{Deep learning in high dimension: Neural network
  expression rates for generalized polynomial chaos expansions in uq},
\newblock \bibinfo{journal}{Analysis and Applications} \bibinfo{volume}{17}
  (\bibinfo{year}{2019}) \bibinfo{pages}{19--55}.
%Type = Article
\bibitem[{O’Leary-Roseberry et~al.(2022)O’Leary-Roseberry, Villa, Chen, and
  Ghattas}]{o2022derivative}
\bibinfo{author}{T.~O’Leary-Roseberry}, \bibinfo{author}{U.~Villa},
  \bibinfo{author}{P.~Chen}, \bibinfo{author}{O.~Ghattas},
\newblock \bibinfo{title}{Derivative-informed projected neural networks for
  high-dimensional parametric maps governed by pdes},
\newblock \bibinfo{journal}{Computer Methods in Applied Mechanics and
  Engineering} \bibinfo{volume}{388} (\bibinfo{year}{2022})
  \bibinfo{pages}{114199}.
%Type = Article
\bibitem[{Wu and Xiu(2020)}]{wu2020data}
\bibinfo{author}{K.~Wu}, \bibinfo{author}{D.~Xiu},
\newblock \bibinfo{title}{Data-driven deep learning of partial differential
  equations in modal space},
\newblock \bibinfo{journal}{Journal of Computational Physics}
  \bibinfo{volume}{408} (\bibinfo{year}{2020}) \bibinfo{pages}{109307}.
%Type = Article
\bibitem[{Lu et~al.(2021)Lu, Jin, Pang, Zhang, and
  Karniadakis}]{lu2021learning}
\bibinfo{author}{L.~Lu}, \bibinfo{author}{P.~Jin}, \bibinfo{author}{G.~Pang},
  \bibinfo{author}{Z.~Zhang}, \bibinfo{author}{G.~E. Karniadakis},
\newblock \bibinfo{title}{Learning nonlinear operators via deeponet based on
  the universal approximation theorem of operators},
\newblock \bibinfo{journal}{Nature Machine Intelligence} \bibinfo{volume}{3}
  (\bibinfo{year}{2021}) \bibinfo{pages}{218--229}.
%Type = Article
\bibitem[{Cybenko(1989)}]{cybenko1989approximation}
\bibinfo{author}{G.~Cybenko},
\newblock \bibinfo{title}{Approximation by superpositions of a sigmoidal
  function},
\newblock \bibinfo{journal}{Mathematics of control, signals and systems}
  \bibinfo{volume}{2} (\bibinfo{year}{1989}) \bibinfo{pages}{303--314}.
%Type = Article
\bibitem[{Hornik(1991)}]{hornik1991approximation}
\bibinfo{author}{K.~Hornik},
\newblock \bibinfo{title}{Approximation capabilities of multilayer feedforward
  networks},
\newblock \bibinfo{journal}{Neural networks} \bibinfo{volume}{4}
  (\bibinfo{year}{1991}) \bibinfo{pages}{251--257}.
%Type = Article
\bibitem[{Hornik et~al.(1989)Hornik, Stinchcombe, and
  White}]{hornik1989multilayer}
\bibinfo{author}{K.~Hornik}, \bibinfo{author}{M.~Stinchcombe},
  \bibinfo{author}{H.~White},
\newblock \bibinfo{title}{Multilayer feedforward networks are universal
  approximators},
\newblock \bibinfo{journal}{Neural networks} \bibinfo{volume}{2}
  (\bibinfo{year}{1989}) \bibinfo{pages}{359--366}.
%Type = Article
\bibitem[{Chen and Chen(1995)}]{392253}
\bibinfo{author}{T.~Chen}, \bibinfo{author}{H.~Chen},
\newblock \bibinfo{title}{Universal approximation to nonlinear operators by
  neural networks with arbitrary activation functions and its application to
  dynamical systems},
\newblock \bibinfo{journal}{IEEE Transactions on Neural Networks}
  \bibinfo{volume}{6} (\bibinfo{year}{1995}) \bibinfo{pages}{911--917}.
%Type = Article
\bibitem[{Raissi et~al.(2018)Raissi, Perdikaris, and
  Karniadakis}]{raissi2018multistep}
\bibinfo{author}{M.~Raissi}, \bibinfo{author}{P.~Perdikaris},
  \bibinfo{author}{G.~E. Karniadakis},
\newblock \bibinfo{title}{Multistep neural networks for data-driven discovery
  of nonlinear dynamical systems},
\newblock \bibinfo{journal}{arXiv preprint arXiv:1801.01236}
  (\bibinfo{year}{2018}).
%Type = Article
\bibitem[{Qin et~al.(2021)Qin, Chen, Jakeman, and Xiu}]{qin2021deep}
\bibinfo{author}{T.~Qin}, \bibinfo{author}{Z.~Chen}, \bibinfo{author}{J.~D.
  Jakeman}, \bibinfo{author}{D.~Xiu},
\newblock \bibinfo{title}{Deep learning of parameterized equations with
  applications to uncertainty quantification},
\newblock \bibinfo{journal}{International Journal for Uncertainty
  Quantification} \bibinfo{volume}{11} (\bibinfo{year}{2021}).
%Type = Article
\bibitem[{Winovich et~al.(2019)Winovich, Ramani, and Lin}]{winovich2019convpde}
\bibinfo{author}{N.~Winovich}, \bibinfo{author}{K.~Ramani},
  \bibinfo{author}{G.~Lin},
\newblock \bibinfo{title}{Convpde-uq: Convolutional neural networks with
  quantified uncertainty for heterogeneous elliptic partial differential
  equations on varied domains},
\newblock \bibinfo{journal}{Journal of Computational Physics}
  \bibinfo{volume}{394} (\bibinfo{year}{2019}) \bibinfo{pages}{263--279}.
%Type = Article
\bibitem[{Zhu et~al.(2019)Zhu, Zabaras, Koutsourelakis, and
  Perdikaris}]{zhu2019physics}
\bibinfo{author}{Y.~Zhu}, \bibinfo{author}{N.~Zabaras}, \bibinfo{author}{P.-S.
  Koutsourelakis}, \bibinfo{author}{P.~Perdikaris},
\newblock \bibinfo{title}{Physics-constrained deep learning for
  high-dimensional surrogate modeling and uncertainty quantification without
  labeled data},
\newblock \bibinfo{journal}{Journal of Computational Physics}
  \bibinfo{volume}{394} (\bibinfo{year}{2019}) \bibinfo{pages}{56--81}.
%Type = Article
\bibitem[{del {\'A}guila~Ferrandis et~al.(2021)del {\'A}guila~Ferrandis,
  Triantafyllou, Chryssostomidis, and Karniadakis}]{del2021learning}
\bibinfo{author}{J.~del {\'A}guila~Ferrandis}, \bibinfo{author}{M.~S.
  Triantafyllou}, \bibinfo{author}{C.~Chryssostomidis}, \bibinfo{author}{G.~E.
  Karniadakis},
\newblock \bibinfo{title}{Learning functionals via lstm neural networks for
  predicting vessel dynamics in extreme sea states},
\newblock \bibinfo{journal}{Proceedings of the Royal Society A}
  \bibinfo{volume}{477} (\bibinfo{year}{2021}) \bibinfo{pages}{20190897}.
%Type = Article
\bibitem[{Jia and Benson(2019)}]{jia2019neural}
\bibinfo{author}{J.~Jia}, \bibinfo{author}{A.~R. Benson},
\newblock \bibinfo{title}{Neural jump stochastic differential equations},
\newblock \bibinfo{journal}{Advances in Neural Information Processing Systems}
  \bibinfo{volume}{32} (\bibinfo{year}{2019}).
%Type = Article
\bibitem[{Chen et~al.(2018)Chen, Rubanova, Bettencourt, and
  Duvenaud}]{chen2018neural}
\bibinfo{author}{T.~Chen}, \bibinfo{author}{Y.~Rubanova},
  \bibinfo{author}{J.~Bettencourt}, \bibinfo{author}{D.~Duvenaud},
\newblock \bibinfo{title}{Neural ordinary differential equations, in
  ‘advances in neural information processing systems’},
\newblock \bibinfo{journal}{La Jolla}  (\bibinfo{year}{2018}).
%Type = Article
\bibitem[{Du and Zaki(2021)}]{du2021evolutional}
\bibinfo{author}{Y.~Du}, \bibinfo{author}{T.~A. Zaki},
\newblock \bibinfo{title}{Evolutional deep neural network},
\newblock \bibinfo{journal}{Physical Review E} \bibinfo{volume}{104}
  (\bibinfo{year}{2021}) \bibinfo{pages}{045303}.
%Type = Article
\bibitem[{Allen and Cahn(1979)}]{allen1979microscopic}
\bibinfo{author}{S.~M. Allen}, \bibinfo{author}{J.~W. Cahn},
\newblock \bibinfo{title}{A microscopic theory for antiphase boundary motion
  and its application to antiphase domain coarsening},
\newblock \bibinfo{journal}{Acta metallurgica} \bibinfo{volume}{27}
  (\bibinfo{year}{1979}) \bibinfo{pages}{1085--1095}.
%Type = Article
\bibitem[{Anderson et~al.(1998)Anderson, McFadden, and
  Wheeler}]{anderson1998diffuse}
\bibinfo{author}{D.~M. Anderson}, \bibinfo{author}{G.~B. McFadden},
  \bibinfo{author}{A.~A. Wheeler},
\newblock \bibinfo{title}{Diffuse-interface methods in fluid mechanics},
\newblock \bibinfo{journal}{Annual review of fluid mechanics}
  \bibinfo{volume}{30} (\bibinfo{year}{1998}) \bibinfo{pages}{139--165}.
%Type = Article
\bibitem[{Cahn and Hilliard(1958)}]{cahn1958free}
\bibinfo{author}{J.~W. Cahn}, \bibinfo{author}{J.~E. Hilliard},
\newblock \bibinfo{title}{Free energy of a nonuniform system. i. interfacial
  free energy},
\newblock \bibinfo{journal}{The Journal of chemical physics}
  \bibinfo{volume}{28} (\bibinfo{year}{1958}) \bibinfo{pages}{258--267}.
%Type = Book
\bibitem[{Doi et~al.(1988)Doi, Edwards, and Edwards}]{doi1988theory}
\bibinfo{author}{M.~Doi}, \bibinfo{author}{S.~F. Edwards},
  \bibinfo{author}{S.~F. Edwards}, \bibinfo{title}{The theory of polymer
  dynamics}, volume~\bibinfo{volume}{73}, \bibinfo{publisher}{oxford university
  press}, \bibinfo{year}{1988}.
%Type = Article
\bibitem[{Elder et~al.(2002)Elder, Katakowski, Haataja, and
  Grant}]{elder2002modeling}
\bibinfo{author}{K.~Elder}, \bibinfo{author}{M.~Katakowski},
  \bibinfo{author}{M.~Haataja}, \bibinfo{author}{M.~Grant},
\newblock \bibinfo{title}{Modeling elasticity in crystal growth},
\newblock \bibinfo{journal}{Physical review letters} \bibinfo{volume}{88}
  (\bibinfo{year}{2002}) \bibinfo{pages}{245701}.
%Type = Article
\bibitem[{Gurtin et~al.(1996)Gurtin, Polignone, and Vinals}]{gurtin1996two}
\bibinfo{author}{M.~E. Gurtin}, \bibinfo{author}{D.~Polignone},
  \bibinfo{author}{J.~Vinals},
\newblock \bibinfo{title}{Two-phase binary fluids and immiscible fluids
  described by an order parameter},
\newblock \bibinfo{journal}{Mathematical Models and Methods in Applied
  Sciences} \bibinfo{volume}{6} (\bibinfo{year}{1996})
  \bibinfo{pages}{815--831}.
%Type = Incollection
\bibitem[{Leslie(1979)}]{leslie1979theory}
\bibinfo{author}{F.~M. Leslie},
\newblock \bibinfo{title}{Theory of flow phenomena in liquid crystals},
\newblock in: \bibinfo{booktitle}{Advances in liquid crystals},
  volume~\bibinfo{volume}{4}, \bibinfo{publisher}{Elsevier},
  \bibinfo{year}{1979}, pp. \bibinfo{pages}{1--81}.
%Type = Article
\bibitem[{Yue et~al.(2004)Yue, Feng, Liu, and Shen}]{yue2004diffuse}
\bibinfo{author}{P.~Yue}, \bibinfo{author}{J.~J. Feng},
  \bibinfo{author}{C.~Liu}, \bibinfo{author}{J.~Shen},
\newblock \bibinfo{title}{A diffuse-interface method for simulating two-phase
  flows of complex fluids},
\newblock \bibinfo{journal}{Journal of Fluid Mechanics} \bibinfo{volume}{515}
  (\bibinfo{year}{2004}) \bibinfo{pages}{293--317}.
%Type = Article
\bibitem[{Shen et~al.(2018)Shen, Xu, and Yang}]{shen2018scalar}
\bibinfo{author}{J.~Shen}, \bibinfo{author}{J.~Xu}, \bibinfo{author}{J.~Yang},
\newblock \bibinfo{title}{The scalar auxiliary variable (sav) approach for
  gradient flows},
\newblock \bibinfo{journal}{Journal of Computational Physics}
  \bibinfo{volume}{353} (\bibinfo{year}{2018}) \bibinfo{pages}{407--416}.
%Type = Article
\bibitem[{Caginalp and Chen(1998)}]{caginalp1998convergence}
\bibinfo{author}{G.~Caginalp}, \bibinfo{author}{X.~Chen},
\newblock \bibinfo{title}{Convergence of the phase field model to its sharp
  interface limits},
\newblock \bibinfo{journal}{European Journal of Applied Mathematics}
  \bibinfo{volume}{9} (\bibinfo{year}{1998}) \bibinfo{pages}{417--445}.
%Type = Article
\bibitem[{Chen et~al.(2006)Chen, Caginalp, and Eck}]{chen2006rapidly}
\bibinfo{author}{X.~Chen}, \bibinfo{author}{G.~Caginalp},
  \bibinfo{author}{C.~Eck},
\newblock \bibinfo{title}{A rapidly converging phase field model},
\newblock \bibinfo{journal}{Discrete \& Continuous Dynamical Systems}
  \bibinfo{volume}{15} (\bibinfo{year}{2006}) \bibinfo{pages}{1017}.

\end{thebibliography}


\begin{thebibliography}{3}
\expandafter\ifx\csname natexlab\endcsname\relax\def\natexlab#1{#1}\fi
\providecommand{\url}[1]{\texttt{#1}}
\providecommand{\href}[2]{#2}
\providecommand{\path}[1]{#1}
\providecommand{\DOIprefix}{doi:}
\providecommand{\ArXivprefix}{arXiv:}
\providecommand{\URLprefix}{URL: }
\providecommand{\Pubmedprefix}{pmid:}
\providecommand{\doi}[1]{\href{http://dx.doi.org/#1}{\path{#1}}}
\providecommand{\Pubmed}[1]{\href{pmid:#1}{\path{#1}}}
\providecommand{\bibinfo}[2]{#2}
\ifx\xfnm\relax \def\xfnm[#1]{\unskip,\space#1}\fi
%Type = Article
\bibitem[{Vehlow et~al.(2013)Vehlow, Reinhardt, and Weiskopf}]{Vehlowetal2013}
\bibinfo{author}{C.~Vehlow}, \bibinfo{author}{T.~Reinhardt},
  \bibinfo{author}{D.~Weiskopf},
\newblock \bibinfo{title}{Visualizing fuzzy overlapping communities in
  networks},
\newblock \bibinfo{journal}{IEEE Trans. Vis. Comput. Graph.}
  \bibinfo{volume}{19} (\bibinfo{year}{2013}) \bibinfo{pages}{2486--2495}.
%Type = Article
\bibitem[{Newman and Girvan(2004)}]{NewmanGirvan2004}
\bibinfo{author}{M.~E.~J. Newman}, \bibinfo{author}{M.~Girvan},
\newblock \bibinfo{title}{Finding and evaluating community structure in
  networks},
\newblock \bibinfo{journal}{Phys. Rev. E.} \bibinfo{volume}{69}
  (\bibinfo{year}{2004}) \bibinfo{pages}{026113}.
%Type = Inproceedings
\bibitem[{Hullermeier and Rifqi(2009)}]{HullermeierRifqi2009}
\bibinfo{author}{E.~Hullermeier}, \bibinfo{author}{M.~Rifqi},
\newblock \bibinfo{title}{A fuzzy variant of the rand index for comparing
  clustering structures},
\newblock in: \bibinfo{booktitle}{in Proc. IFSA/EUSFLAT Conf.},
  \bibinfo{year}{2009}, pp. \bibinfo{pages}{1294--1298}.

\end{thebibliography}

\end{document}